%% file: iclr2027_conference.tex
\documentclass{article} 
\usepackage{iclr2027_conference,times}

\input{math_commands.tex}

\usepackage{hyperref}
\usepackage{url}
\usepackage{multirow}
\usepackage{booktabs}
\usepackage{graphicx}

\usepackage{amsmath}
\usepackage{amssymb}
\usepackage{graphicx}

\usepackage{multirow}
\usepackage{makecell}
\usepackage{caption}

\usepackage{adjustbox}
\usepackage{wrapfig}

\usepackage{float}
\usepackage{subfig}
\usepackage{algorithm}
\usepackage{algorithmic}
\usepackage[utf8]{inputenc} 
\usepackage[T1]{fontenc}    
\usepackage{url}            
\usepackage{booktabs}       
\usepackage{amsfonts}       
\usepackage{nicefrac}       
\usepackage{microtype}      
\usepackage{array}
\newcolumntype{C}[1]{>{\centering\arraybackslash}m{#1}}
\usepackage{xcolor}         
\usepackage{color, colortbl}
\usepackage[normalem]{ulem}
\usepackage{framed}
\usepackage{makecell}
\usepackage{listings}
\usepackage{xspace}
\usepackage{pifont}

\usepackage{amsthm}

\theoremstyle{plain}
\newtheorem{theorem}{Theorem}[section]

\newtheorem{lemma}[theorem]{Lemma}

\theoremstyle{definition}

\newtheorem{assumption}[theorem]{Assumption}
\theoremstyle{remark}

\title{Progressive Agent Skill Generation via \\ Reinforcement Learning}

\author{
 Junhao Shen$^{1}$\quad \textbf{Zhanqiu Zhang}$^{2\dag}$\quad \textbf{Yiwen Guo}$^{3\dag}$\quad \textbf{Hong Cheng}$^{1}$ \\
$^1$The Chinese University of Hong Kong \\
$^2$LIGHTSPEED \quad $^3$Independent Researcher\\
\texttt{shen.junhao@outlook.com} \quad \texttt{\{zqzhang27,guoyiwen89\}@gmail.com}\\
\texttt{hcheng@se.cuhk.edu.hk}
}

\newcommand\methodname{Skill-$\alpha$}

\iclrfinalcopy 
\begin{document}
\maketitle
\renewcommand\thefootnote{}
\footnotetext{$\dagger$ Corresponding authors.}

\input{sections/abstract}
\input{sections/introduction}

\input{sections/related_work}

\input{sections/preliminaries}
\input{sections/method}

\input{sections/experiments}
\input{sections/conclusion}

\bibliography{iclr2027_conference}
\bibliographystyle{iclr2027_conference}

\clearpage
\appendix
\setcounter{table}{0}
\setcounter{figure}{0}
\setcounter{equation}{0}
\renewcommand{\thetable}{A\arabic{table}}
\renewcommand\thefigure{A\arabic{figure}}
\renewcommand\theequation{A\arabic{equation}}
\input{sections/appendix}

\end{document}

%% file: math_commands.tex
\usepackage{amsmath,amsfonts,bm}

\providecommand{\eg}{\textit{e.g.}\@\xspace}
\providecommand{\ie}{\textit{i.e.}\@\xspace}

\def\eqref#1{equation~\ref{#1}}

\def\1{\bm{1}}

\DeclareMathAlphabet{\mathsfit}{\encodingdefault}{\sfdefault}{m}{sl}
\SetMathAlphabet{\mathsfit}{bold}{\encodingdefault}{\sfdefault}{bx}{n}



%% file: sections/abstract.tex
\begin{abstract}
Recent large language model agents often use external skills as modular procedural units that condition inference and improve complex task solving. Thus, automatically generating high-quality skills from documents or experience has become an important problem.
Existing skill generation methods largely rely on heuristics or pipeline-style consolidation, which must be specially designed for different evidence sources. In contrast, learning-based approaches offer a more unified way to model skill generation across heterogeneous sources. However, learning-based skill generation remains challenging because skills lack a natural supervision signal based on relevance or correctness; their value can largely be determined only by whether they improve the behavior of the agent on downstream tasks.
To address this challenge, we propose \methodname{}, a reinforcement learning method that learns a unified policy for progressive skill generation. 
Specifically, we construct each skill by repeatedly applying the learned policy to successive source evidence and introduce a novel rollback reward that evaluates each edit by comparing downstream execution under the original and edited skills on an anchored query.
Extensive experiments show that \methodname{} generates more effective skills than methods based on heuristics or pipelines in both document-to-skill and experience-to-skill settings. Under the main GPT-4o worker, \methodname{} improves average downstream success rates over the strongest skill-generation baseline by 3.1 points on CL-Bench and 6.7 points on tau2-bench. Further ablations and analysis validate the importance of rollback reward and progressive generation.
Code is available at \url{https://github.com/ejhshen/skill-alpha}.
\end{abstract}

%% file: sections/introduction.tex
\section{Introduction}

Recent large language model (LLM) agents~\citep{luo2025llmagent_survey,wang2024llm_autonomous_agent_survey} are usually expected to solve tasks that require multi-step reasoning~\citep{plaat2026multistep}, long-horizon planning~\citep{huang2024understanding}, and reliable tool use~\citep{shen2024llm}. A practical way to extend these agents is to provide external skills that condition inference and guide how the agent decomposes tasks, invokes tools, and checks intermediate results~\citep{wang2025reinforcement,wang2023voyager,shen2026dynamic}. As skills become reusable modules for shaping agent behavior without retraining the model, automatically generating high-quality skills from documents or experience becomes an important problem~\citep{anthropic_skills}.

Existing skill generation methods largely rely on heuristics, prompting, or pipeline-style consolidation. Document-to-skill methods compress documents, rules, or task descriptions into procedural instructions~\citep{si2026context,yang2026autoskill}, while experience-to-skill methods distill successful or failed executions into reusable guidance~\citep{ni2026trace2skill,mi2026skill}. Although effective in specific settings, these methods usually require separate designs for different evidence sources and provide limited guidance on how each piece of evidence should change the skill being written. Thus, the central challenge is not only to produce skill text, but to learn a unified skill generation process.

However, learning to generate skills is difficult because skills do not have a natural supervision signal based on relevance or correctness. Unlike mathematical reasoning, web search, or memory management, where outputs can often be evaluated against explicit factual answers or execution results~\citep{guo2025deepseek,jin2025search,yan2025memory}, the quality of a skill lacks a similarly direct supervision signal and is often assessed through its effect on downstream task performance. A fluent skill may still be redundant, over-specific, or misleading, while a compact edit may substantially improve the worker agent's behavior~\citep{huang2026rawexperienceskillconsumption,zhou2026skillgenbench}. This makes execution-grounded credit assignment the key obstacle to learning-based skill generation.

To address this issue, we propose \methodname{}, a reinforcement learning framework that learns a unified policy for progressive skill generation. The learned policy is repeatedly applied over source evidence, with each edited skill serving as the state for the next construction step.
Specifically, given a skill draft and new evidence, the generator learns whether to add new procedures, revise imprecise rules, merge redundant guidance, remove harmful or over-specific content, or leave the skill unchanged. We introduce rollback reward as the key training signal. After a candidate edit is applied, the original and edited skills are evaluated on the same evidence-related anchored query and compared by a benchmark-specific verifier, so that the reward reflects the local effect of changing the skill condition. 
In this way, \methodname{} turns skill generation pipelines into a unified learning problem over local skill edits, enabling the skill generator to be trained with reinforcement learning.

We evaluate \methodname{} in both document- and experience-to-skill settings. Under the GPT-4o worker, \methodname{} improves average downstream success rates over the strongest baselines by 3.1 points on CL-Bench~\citep{dou2026cl} and 6.7 points on tau2-bench~\citep{barres2025tau2}. On the fully unseen BFCL multi-turn benchmark~\citep{patil2023gorilla}, \methodname{} is the only skill-generation method that outperforms the no-skill worker under different backbones. Ablations further validate the importance of rollback reward and progressive generation.

Overall, our contributions are threefold. First, we formulate agent skill generation as a progressive decision-making problem, unifying document-to-skill and experience-to-skill generation under a single learning framework. Second, we introduce rollback reward, which provides execution-grounded credit assignment for local skill edits. Third, we empirically show that \methodname{} produces effective skills from heterogeneous evidence sources, with ablations and analyses validating the roles of rollback reward and progressive generation.

%% file: sections/related_work.tex
\section{Related Work}
\noindent\textbf{Agent Skills.}
Agent skills are reusable procedural modules that condition the inference process of an agent and reshape its behavior on downstream tasks~\citep{anthropic_skills}. Recent work has studied skills from multiple perspectives, including skill evolution~\citep{shen2026dynamic,skillrl2026,ouyang2026skillos}, selection~\citep{zeng2026group,liu2026graph,zheng2026skillrouter}, utilization~\citep{wang2023voyager,wang2025reinforcement}, and internalization~\citep{wang2026skill}. These studies show that skills are functional units that can guide planning, tool use, and long-horizon task solving~\citep{wang2026webxskill,wang2026effiskill,xu2025speci}. Because of this role, more attention has shifted toward skill generation and acquisition~\citep{si2026context,yang2026autoskill,ni2026trace2skill,mi2026skill} and benchmark evaluation~\citep{huang2026rawexperienceskillconsumption,zhou2026skillgenbench,skillsbench2026}. \methodname{} follows this direction by treating automatic external-skill generation as the target of learning.

\noindent\textbf{Skill Generation and Acquisition.}
Skill generation and acquisition focus on producing reusable skill modules from evidence sources such as documents, task contexts, experience, and trajectories. Early work explored converting verifier feedback into reusable procedural memory, guidelines or workflows, including ExpeL~\citep{zhao2024expel}, Agent Workflow Memory~\citep{wang2025awm}, AutoGuide~\citep{fu2024autoguide} and AutoFlow~\citep{li2024autoflow}. Related work has also studied automatic skill construction through prompt-based skill creation~\citep{anthropic_skill_creator}. More recent pipeline-based skill generation systems generally fall into two categories. For document-to-skill settings, Ctx2Skill~\citep{si2026context} and AutoSkill~\citep{yang2026autoskill} aim to compress task documents, rules, or contexts into executable procedural instructions; for experience-to-skill settings, Trace2Skill~\citep{ni2026trace2skill}, SkillX~\citep{skillx2026}, and SkillPro~\citep{mi2026skill} distill trajectories, feedback, or hierarchical experience into reusable skill knowledge. These methods automate skill acquisition but rely on heuristic or pipeline-style consolidation tailored to specific evidence types. 
In contrast, \methodname{} learns one unified editing policy and repeatedly applies it to construct skills from heterogeneous evidence sources.

\noindent\textbf{Reinforcement Learning for External Module Optimization.}
Reinforcement learning (RL) has become a common approach for optimizing LLM agents to enhance slow-thinking reasoning capabilities~\citep{schulman2017proximal,guo2025deepseek,yu2024dapo,shen2026sophia,agentr12025}. Beyond optimizing the acting policy itself, recent work also applies RL to external module management. In tool-use and web-search settings, RL is used to improve how agents invoke external tools~\citep{li2025torl,singh2025agentic}, interact with environments~\citep{agentr12025,shen2026achieving}, and use search results~\citep{qi2024webrl,jin2025search} to complete complex tasks. Another line of work uses RL to optimize external memory operations, including Memory-R1~\citep{yan2025memory} and Mem-$\alpha$~\citep{wang2025mem}. However, skill generation differs from both tool-use optimization and memory management because these decisions can often be judged by factual correctness or relevance, whereas a skill is valued largely through how it changes the worker agent's future behavior. Recent works like Skill-R1~\citep{vishe2026skillr1} recurrently optimize a task-specific skill using verified rollout histories for each task instance, whereas \methodname{} learns a unified evidence-to-skill generation policy that is applied to new tasks without task-specific parameter optimization.

%% file: sections/preliminaries.tex
\section{Preliminaries}
\label{sec:preliminaries}

\noindent\textbf{LLM Agent.}
We model a worker agent as a fixed policy $\pi_{\psi}$ that interacts with an environment to solve a task. Given a task instruction or context $q$, the worker produces an execution trajectory $\tau=(o_1,a_1,\ldots,o_H,a_H)$, where $H$ is the trajectory length and $o_h$ and $a_h$ denote the observation and action at worker step $h$. An external skill $z$ is inserted into the worker context. The skill-conditioned worker induces the trajectory distribution $\pi_{\psi}(\tau \mid z,q)$. In this work, $\pi_{\psi}$ is not the training target. We instead train a skill-editing policy $\pi_\phi$ that samples a local edit action $A_t$ conditioned on the current skill state and source evidence at each step. Applying these actions progressively produces the skill $z$ that conditions the fixed worker on downstream tasks.

\noindent\textbf{Group Relative Policy Optimization.}
Group Relative Policy Optimization (GRPO)~\citep{guo2025deepseek} is used to optimize the skill generator. For an input $X$, GRPO samples a group of edit actions $\{A_i\}_{i=1}^{G}$ from the old policy $\pi_{\phi_{\mathrm{old}}}$ and assigns each action a scalar reward $r_i$. The group-relative advantage is $\widehat{\mathsf{A}}_i = \frac{r_i-\mathrm{mean}(\{r_1,\ldots,r_G\})}{\mathrm{std}(\{r_1,\ldots,r_G\})}$. The policy is then optimized with the objective
\begin{equation}
\begin{aligned}
\mathcal{J}_{\mathrm{GRPO}}(\phi)
=
\mathbb{E}
\left[
\frac{1}{G}\sum_{i=1}^{G}
\left(
\min\left(
\rho_i(\phi)\widehat{\mathsf{A}}_i,
\mathrm{clip}(\rho_i(\phi),1-\epsilon,1+\epsilon)\widehat{\mathsf{A}}_i
\right)
-\beta D_{\mathrm{KL}}\!\left(\pi_{\phi}\|\pi_{\mathrm{ref}}\right)
\right)
\right],
\end{aligned}
\label{eq:grpo}
\end{equation}
where $\rho_i(\phi)=\frac{\pi_{\phi}(A_i\mid X)}{\pi_{\phi_{\mathrm{old}}}(A_i\mid X)}$, $\pi_{\mathrm{ref}}$ is the reference policy, and $\epsilon,\beta$ are hyperparameters.

\noindent\textbf{Skill Generation Formalization.}
For each task family $\mathcal{M}$, let $p^\star_{\mathcal{M}}$ denote a conceptual query-conditioned joint distribution over evidence and high-quality teacher behavior. Let $x_{1:T}$ denote a sequence of $T$ evidence units (\eg, document segments or batches of execution traces). For a source query $q'$, we sample $x_{1:T}\sim p^\star_{\mathcal{M}}(\cdot\mid q')$ from its evidence marginal, while $p^\star_{\mathcal{M}}(\tau\mid q')$ denotes the corresponding teacher behavior marginal. Given an initial skill state $z_0$, the skill-editing policy $\pi_\phi$ samples local edit actions whose progressive application induces a distribution $\bar{\pi}_\phi(\cdot\mid x_{1:T},z_0)$ over the final skill $z$. The value of a generated skill is defined by how it changes the behavior distribution of the fixed worker agent $\pi_{\psi}(\tau\mid z,q)$ on a held-out target query $q$ from the same task family, where typically $q'\neq q$. The ideal objective of skill generation is
\begin{equation}
\min_{\phi}
\mathbb{E}_{
\mathcal{M}, q',
x_{1:T}\sim p^\star_{\mathcal{M}}(\cdot\mid q'),
q,
z\sim \bar{\pi}_{\phi}(\cdot\mid x_{1:T},z_0)
}
\left[
D_{\mathrm{KL}}\!\left(
p^\star_{\mathcal{M}}(\tau\mid q)
\;\|\;
\pi_{\psi}(\tau\mid z,q)
\right)
\right].
\label{eq:ideal_skill_generation}
\end{equation}
This objective formalizes skill generation as learning to construct a skill from source evidence induced by $q'$ so that the fixed worker moves toward high-quality behavior on a held-out target query $q$. Since the teacher behavior distribution $p^\star_{\mathcal{M}}(\tau\mid q)$ is unavailable, this ideal objective cannot be directly optimized and must be approximated through an indirect but related reward signal.

%% file: sections/method.tex
\begin{figure}[t]
    \centering
    \includegraphics[width=0.85\linewidth]{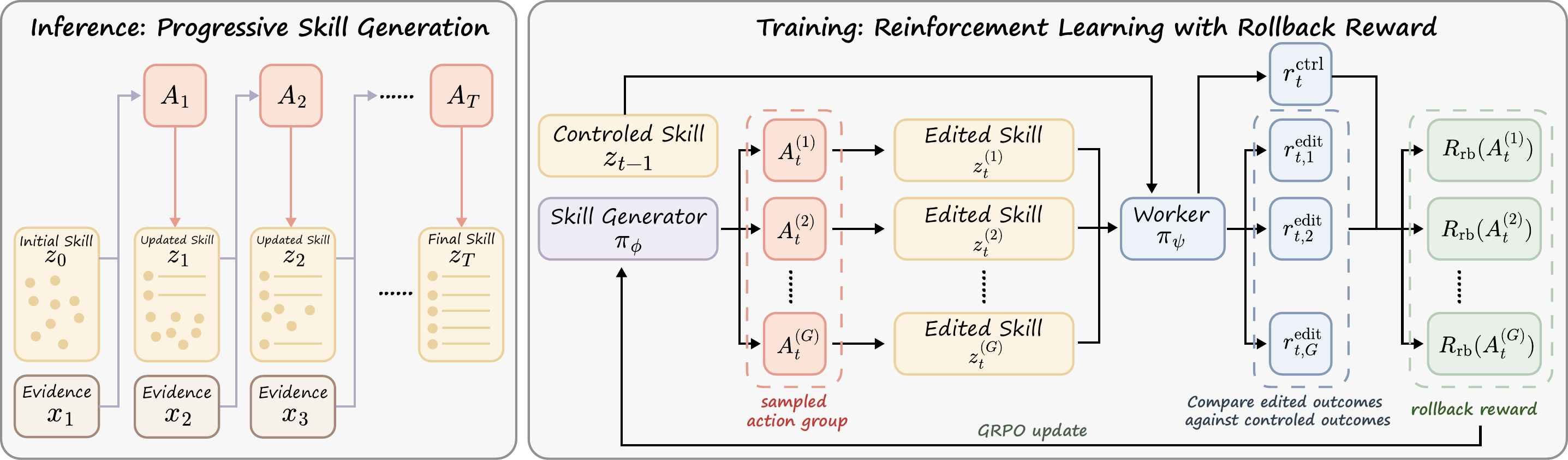}
    \caption{An overview of \methodname{}. Left: during inference, \methodname{} reads evidence sequentially and applies a sequence of local edit actions to progressively generate the skill from the initial state $z_0$ to the final skill $z_T$. Right: during training, the skill generator samples a group of candidate actions from the current skill state, constructs the edited skills, and evaluates them on the same anchored query; the resulting rollback rewards are then used for the GRPO update.}
    \vspace{-13pt}
    \label{fig:overview}
\end{figure}

\section{Method: \methodname{}}
\label{sec:method}
To approximate the ideal objective in Eq.~\ref{eq:ideal_skill_generation}, we develop \methodname{} as a trainable framework for progressive skill generation. Specifically, we first introduce how to decompose skill generation into local edits in Section~\ref{sec:progressive_skill_generation}, then show the rollback reward for edit-level credit assignment in Section~\ref{sec:rollback_grounded_editing}, and present the training implementation in Section~\ref{sec:training_implementation}. We illustrate the overall framework in Figure~\ref{fig:overview}.

\subsection{Progressive Skill Generation}
\label{sec:progressive_skill_generation}
Ideally, a generator should read all source evidence $x_{1:T}$ induced by $q'$ and produce a complete skill in one shot, regardless of the generation strategy. In practice, the source evidence often exceeds the context window of an LLM because individual units are too long or the sequence contains too many units, causing key information to be missed. Moreover, one-shot skill generation entangles several operations, including abstraction, denoising, conflict resolution, and compression, which makes credit assignment weak and training difficult.
To address this issue, we construct the skill through a sequence of local edit decisions rather than one-shot generation.
Starting from an initial skill $z_0$, the generator reads evidence sequentially and updates the current skill one step at a time, \ie,
\begin{equation}
A_t\sim\pi_\phi(\cdot\mid z_{t-1},x_t),
\qquad
z_t=\mathrm{Edit}(z_{t-1},A_t).
\label{eq:progressive_skill_edit}
\end{equation}
Here, $A_t$ is an edit conditioned on the current skill $z_{t-1}$ and evidence $x_t$, and $\mathrm{Edit}(\cdot)$ applies it to produce $z_t$, which the same policy uses to process $x_{t+1}$. After $T$ steps, the final skill $z_T$ is a sample from the induced distribution $\bar{\pi}_{\phi}(\cdot\mid x_{1:T},z_0)$ in Eq.~\ref{eq:ideal_skill_generation}.
This progressive formulation covers both document- and experience-to-skill settings, reducing generation to a unified local decision problem, \ie, how should each evidence unit be used to edit the current skill so that the fixed worker $\pi_\psi$ performs better on held-out queries? We use a rollback reward to supervise these edit decisions.

\subsection{Action Space and Rollback Reward}
\label{sec:rollback_grounded_editing}
We define the edit action space $\mathcal{A}=\{\textsc{Create}, \textsc{Update}, \textsc{Merge}, \textsc{Prune}, \textsc{Noop}\}$. Each sampled action edits a single skill artifact. \textsc{Create} adds missing rules or procedures; \textsc{Update} fixes incomplete or inaccurate rules; \textsc{Merge} consolidates overlapping content; \textsc{Prune} removes misleading, overly specific, or redundant content; and \textsc{Noop} leaves the skill unchanged when the current evidence provides no useful new information.

The remaining challenge is edit-level reward assignment. Directly rewarding an edit by rolling out the worker agent with the edited skill is insufficient, since a correct downstream answer may come from the capability of the worker agent or from an easy query, rather than from the edit itself. Thus, we introduce rollback reward, which assigns credit by comparing the old and edited skills on the same anchored query.
For an edit decision at step $t$, we attach an anchored query $q_t^{\mathrm{anc}}$ selected from queries associated with the same evidence source, and a benchmark-specific verifier $\mathcal{V}_t$. After applying $A_t$ to obtain $z_t$, the fixed worker answers the same anchored query under the original and edited skills:
\begin{align}
a_t^{\mathrm{ctrl}} \sim \pi_\psi(\cdot\mid q_t^{\mathrm{anc}},z_{t-1}), \quad
a_t^{\mathrm{edit}} \sim \pi_\psi(\cdot\mid q_t^{\mathrm{anc}},z_t)\,.
\end{align}
The verifier then assigns scalar feedback to the two outcomes, \ie, $r_t^{\mathrm{ctrl}}=\mathcal{V}_t(a_t^{\mathrm{ctrl}})$ and $r_t^{\mathrm{edit}}=\mathcal{V}_t(a_t^{\mathrm{edit}})$. In our main setup, CL-Bench uses official task-specific rubrics, while SpreadsheetBench and tau2-bench use direct benchmark-side environment feedback on the same anchored evaluation query. In GRPO training, rollback rewards are assigned within each sampled action group. For a sampled candidate $A_t^{(i)}$, we write its edited verifier output as $r_{t,i}^{\mathrm{edit}}$. Formally,

\begin{equation}
R_{\mathrm{rb}}(A_t^{(i)})=
\begin{cases}
1, & \text{if } A_t^{(i)}\neq\textsc{Noop} \text{ and } r_{t,i}^{\mathrm{edit}} > r_t^{\mathrm{ctrl}},\\
1, & \text{if } A_t^{(i)}=\textsc{Noop} \text{ and no other valid sampled edit achieves } \\
   & \text{a verifier output larger than } r_t^{\mathrm{ctrl}},\\
0, & \text{otherwise}.
\end{cases}
\label{eq:rollback_reward}
\end{equation}
This local reward is used during RL for edit-level credit assignment, while full-task execution remains the evaluation target. Appendix~\ref{app:theory} shows that expected rollback reward preserves ideal-preference rankings under a calibrated verifier and gives an exact success-probability ordering for a binary verifier, providing a local justification for group-relative optimization.

\subsection{Training Implementation}
\label{sec:training_implementation}
\begin{table}[t]
\centering
\small
\begin{tabular}{p{0.8\linewidth}}
\toprule[0.5pt]
\textbf{system:} Skill generation action policy.\\[2pt]
\textbf{user:} \\
\quad \texttt{\#\# Current SKILL.md}\\
\quad \texttt{<skill>} current skill content \texttt{</skill>}\\
\quad \texttt{\#\# Evidence}\\
\quad \texttt{<evidence>} serialized source evidence \texttt{</evidence>}\\[2pt]
\textbf{assistant:} \\
\quad \texttt{<think>} evidence-grounded diagnosis and edit rationale \texttt{</think>}\\
\quad \texttt{<action>} \{"action": "...", ...\} \texttt{</action>}\\
\bottomrule[0.5pt]
\end{tabular}
\caption{Training template for \methodname{}. The policy writes a reasoning trace and one structured edit action for the current skill and evidence.}
\vspace{-17pt}
\label{tab:skill_alpha_template}
\end{table}

We train $\pi_\phi$ as a structured skill-editing policy by initializing it from instruction-tuned Qwen3-8B~\citep{yang2025qwen3}, warming it up with supervised skill-editing data, and then optimizing it with GRPO. Each sample contains the current skill $z_{t-1}$, and evidence batch $x_t$ with an anchored query $q_t^{\mathrm{anc}}$ and verifier $\mathcal{V}_t$. The policy observes only $(z_{t-1},x_t)$, while the anchored query and verifier are used exclusively to compute rollback reward.
As shown in Table~\ref{tab:skill_alpha_template}, the policy outputs a reasoning trace and one structured edit action for the current state. The input construction is shared across both evidence types. For document-to-skill, short contexts are used directly, while long contexts are decomposed into ordered natural segments and processed progressively. For experience-to-skill, evidence is built from trajectories sampled on the same training split used for \methodname{}, then serialized into a compact view that preserves the task identity, key steps, and environment feedback. In the main setting, each experience batch contains up to $4$ traces, each trace contributes at most $8$ steps and $3{,}000$ characters, and the prompt budget is capped at $8{,}192$ tokens. This shared serialization keeps the local editing state within context limits while exposing evidence structure across benchmarks.

During the SFT stage, the warm-up data is synthesized by DeepSeek-V4-Pro~\citep{xu2026deepseek} from the training splits of CL-Bench, SpreadsheetBench, and tau2-bench. We keep only filtered edit trajectories that follow the shared \texttt{<think>}+\texttt{<action>} format, so that the policy learns the action syntax, local edit structure, and basic evidence-grounded editing behavior under the same interface used later in RL. The resulting checkpoint is then used to initialize GRPO.

After warm-up, we optimize the policy with GRPO while keeping the worker agent fixed as GPT-4o~\citep{gpt4o}. 
For a local editing state $(z_{t-1},x_t)$, the old policy samples a group of candidate actions $A_t^{(1)},\ldots,A_t^{(G)} \sim \pi_{\phi_{\mathrm{old}}}(\cdot \mid z_{t-1},x_t)$. 
For CL-Bench, $q_t^{\mathrm{anc}}$ is a training query associated with the same source context as the local editing state, and $\mathcal{V}_t$ is a GPT-5.5~\citep{gpt55} rubric judge instantiated with the official task-specific evaluation rules. 
For SpreadsheetBench and tau2-bench, $q_t^{\mathrm{anc}}$ is sampled from unsuccessful or partially successful training trajectories in the same source group and excluded from the evidence of that state. $\mathcal{V}_t$ is the direct benchmark-side feedback returned by the same anchored execution interface.
The current skill is evaluated once on the anchored query to obtain the control score $r_t^{\mathrm{ctrl}}$, and each edited skill is then evaluated with the same worker and verifier to obtain $r_{t,1}^{\mathrm{edit}},\ldots,r_{t,G}^{\mathrm{edit}}$. The resulting scores are converted into rollback rewards by Eq.~\ref{eq:rollback_reward}. 
Thus, GRPO optimizes the unified editing policy using rollback rewards at individual edit states. Repeated application of this policy produces the progressive skill-construction process in Eq.~\ref{eq:progressive_skill_edit}.

\input{algo/skill_alpha}

For training, SFT uses $8$ GPUs, a training batch size of $32$, a micro-batch size of $1$ per GPU, a maximum sequence length of $32{,}768$, a learning rate of $5\times 10^{-6}$, a weight decay of $0.01$, a warmup ratio of $0.05$, cosine scheduling, and $3$ epochs. GRPO uses a group size of $G=8$, a training batch size of $8$, an actor learning rate of $5\times 10^{-7}$, a rollout temperature of $1.0$, an entropy coefficient of $0.001$, a maximum prompt length of $8{,}192$, a maximum response length of $32{,}768$, and a maximum model length of $40{,}960$, with KL regularization disabled. The overall procedure is shown in Algorithm~\ref{alg:skill_alpha}. More detailed implementation details, including exact evidence preprocessing, anchored-query construction, and verifier protocols, are given in Appendix~\ref{app:skillalpha_impl}.

%% file: algo/skill_alpha.tex
\begin{algorithm}[t]
\caption{Reinforcement Learning with Rollback Reward for \methodname{}}
\begin{algorithmic}[1]
\STATE \textbf{Inputs:} edit-state dataset $\mathcal{D}$, skill generator $\pi_\phi$, fixed worker $\pi_\psi$, and group size $G$.
\FOR{each $(z_{t-1},x_t,q_t^{\mathrm{anc}},\mathcal{V}_t)\in\mathcal{D}$}
    \STATE Evaluate the control skill $z_{t-1}$ on the anchored query $q_t^{\mathrm{anc}}$ and obtain $r_t^{\mathrm{ctrl}}$.
    \STATE Sample $G$ candidate edit actions $\{A_t^{(i)}\}_{i=1}^{G}$ from $\pi_{\phi_{\mathrm{old}}}(\cdot\mid z_{t-1},x_t)$.
    \FOR{$i=1,2,\ldots,G$}
        \STATE Apply $A_t^{(i)}$ to the current skill and obtain $z_t^{(i)}$.
        \STATE Evaluate the edited skill $z_t^{(i)}$ on the same anchored query $q_t^{\mathrm{anc}}$ and obtain $r_{t,i}^{\mathrm{edit}}$.
    \ENDFOR
    \STATE Compute the rollback rewards $\{R_{\mathrm{rb}}(A_t^{(i)})\}_{i=1}^{G}$ with Eq.~\ref{eq:rollback_reward}, including the \textsc{Noop} fallback rule.
    \STATE Update $\pi_\phi$ on the candidate group using GRPO as defined in Eq.~\ref{eq:grpo}.
\ENDFOR
\STATE \textbf{Return:} Optimized skill generator $\pi_\phi$.
\end{algorithmic}
\label{alg:skill_alpha}
\end{algorithm}

%% file: sections/experiments.tex
\section{Experiments}
\providecommand{\best}[1]{\textbf{#1}}
\providecommand{\second}[1]{\underline{#1}}

\subsection{Evaluation Setup}
\noindent\textbf{Benchmarks and Baselines.}
We evaluate \methodname{} in both document- and experience-to-skill settings. For document-to-skill, we use CL-Bench~\citep{dou2026cl}. Training uses Rule System Application and Procedural Task Execution as source categories, and evaluation reports held-out tasks from these two categories together with the unseen categories Domain Knowledge Reasoning and Empirical Discovery \& Simulation. Since CL-Bench is evaluated with LLM-as-judge under task-specific rubrics, we use GPT-5.5 as the CL-Bench judge for all reported results. We compare against \textsc{No Skill}, Anthropic Skill-Creator~\citep{anthropic_skill_creator}, Progressive Prompt Skill, Ctx2Skill~\citep{si2026context}, and AutoSkill~\citep{yang2026autoskill}.
For experience-to-skill, we use SpreadsheetBench~\citep{ma2024spreadsheetbench} and tau2-bench~\citep{barres2025tau2}. SpreadsheetBench tests execution-based spreadsheet manipulation, while tau2-bench tests workflow skill reuse across Airline, Retail, and Telecom. 
To evaluate transfer beyond the data used for training, we additionally use BFCL multi-turn~\citep{patil2023gorilla} as an unseen experience-to-skill benchmark. No BFCL trajectory is used to train \methodname{}. The experience baselines are \textsc{No Skill}, Anthropic Skill-Creator, Progressive Prompt Skill, ExpeL~\citep{zhao2024expel}, Agent Workflow Memory (AWM)~\citep{wang2025awm}, Trace2Skill~\citep{ni2026trace2skill}, SkillX~\citep{skillx2026}, and SkillPro~\citep{mi2026skill}. Baseline details are provided in Appendix~\ref{app:baseline_impl}.

\noindent\textbf{Evaluation.}
All methods follow the same source-evidence and held-out-task protocol. For document-to-skill, every method receives the same source contexts and is evaluated on the same held-out tasks after injecting the generated skill into the worker. For experience-to-skill, the source trajectories are sampled once by GPT-4o on the benchmark training split, and this source split is exactly the trajectory training split used by \methodname{}; all automatic baselines receive the same trajectories, tool observations, and execution feedback for skill construction. We report same-worker evaluation with GPT-4o and cross-worker transfer to Claude-Sonnet-4.5~\citep{cluade45sonnet}. Note that Claude-Sonnet-4.5 is used only as a downstream worker that consumes the same skills generated from the GPT-4o trajectory pool. We report pass rates on CL-Bench, SpreadsheetBench, tau2-bench, and BFCL multi-turn. Detailed benchmark splits, worker settings, and leakage-prevention rules are given in Appendix~\ref{app:setup}.

\subsection{Main Results}
\input{tables/context_main_results}

\noindent\textbf{Document-to-skill.}
Table~\ref{tab:context_main_results} shows that \methodname{} achieves stable skill-construction performance on CL-Bench. Under GPT-4o, the largest gain appears on Procedural Task Execution, where \methodname{} improves from $4.30$ without skills to $9.68$. This suggests that \methodname{} is turning documents into executable skills instead of merely shortening long contexts. On other splits, \methodname{} is best or near-best, indicating that the generated skills preserve information useful for both procedural and reasoning-heavy tasks.
Under Claude-Sonnet-4.5, \methodname{} is best or tied-best on all four categories and in the average. Since this backbone already has a relatively strong \textsc{No Skill} baseline, these gains indicate that the generated skills not only fit the backbone used for skill construction but also provide reusable knowledge that transfers to another strong worker.

\input{tables/experience_main_results}

\noindent\textbf{Experience-to-skill.}
Table~\ref{tab:experience_main_results} shows that the advantage of \methodname{} is even clearer in experience-to-skill. Under GPT-4o, it is best or tied-best on every reported metric, improving performance on SpreadsheetBench and achieving a tau2-bench average of $55.83$.
These gains suggest that \methodname{} learns reusable execution strategies rather than merely recording experience or writing static rules. Under Claude-Sonnet-4.5, \methodname{} remains best or tied-best on SpreadsheetBench, Airline, Telecom, and the tau2-bench average, and stays close on Retail, showing that the learned skills transfer across workers even though Claude only consumes GPT-4o-generated skills.

\input{tables/bfcl_unseen_domain_results}

\noindent\textbf{Generalization to Unseen Domains.}
Table~\ref{tab:bfcl_unseen_domain_results} evaluates transfer to BFCL multi-turn, which is fully excluded from skill-generator training. At test time, each method uses only BFCL source trajectories to construct Sub-API skills and is evaluated on disjoint held-out tasks, measuring out-of-distribution evidence-to-skill generalization rather than task memorization. \methodname{} achieves the highest average under both workers and is the only skill-generation method that exceeds \textsc{No Skill} in both settings. While several baselines incur negative transfer on the unseen API ecosystem, \methodname{} preserves the original GPT-4o capability and provides a $4.02$-point gain under the transfer worker. These results indicate that the learned editing policy transfers to unseen tasks with lower negative-transfer risk.

\noindent\textbf{Necessity of Training.}
To test whether training is necessary, we compare against two strong prompt baselines, \ie, Anthropic Skill-Creator and Progressive Prompt Skill. They can be competitive on individual columns under both backbones, but their gains are unstable across tasks and workers. Anthropic drops to $7.50$ on Telecom, below the \textsc{No Skill} baseline of $12.50$, and Progressive drops to $13.50$ on SpreadsheetBench, far below the \textsc{No Skill} baseline of $26.00$. Since Progressive already performs iterative refinement, the more consistent gains of \methodname{} indicate that training a skill-editing policy is necessary rather than repeatedly applying a fixed prompt.

\noindent\textbf{Generalization Across Evidence Sources and Workers.}
Taken together, the main results support the central claim of this paper. Most baselines are evidence-source specific. AutoSkill and Ctx2Skill are designed for document evidence, whereas Trace2Skill, ExpeL, AWM, SkillX, and SkillPro are designed for experience evidence. In contrast, \methodname{} uses one progressive skill editing framework across both sources and remains strong in both settings.
The cross-worker results provide a second form of generalization evidence. Gains are larger under GPT-4o, suggesting that the generated skills effectively supplement the backbone used for skill construction. At the same time, the same skills remain useful under Claude-Sonnet-4.5, which indicates that \methodname{} is learning externally usable task-solving knowledge rather than a worker-specific prompt shortcut.

\begin{figure}[t]
    \centering
    \vspace{0.8em}
    \includegraphics[width=0.80\linewidth]{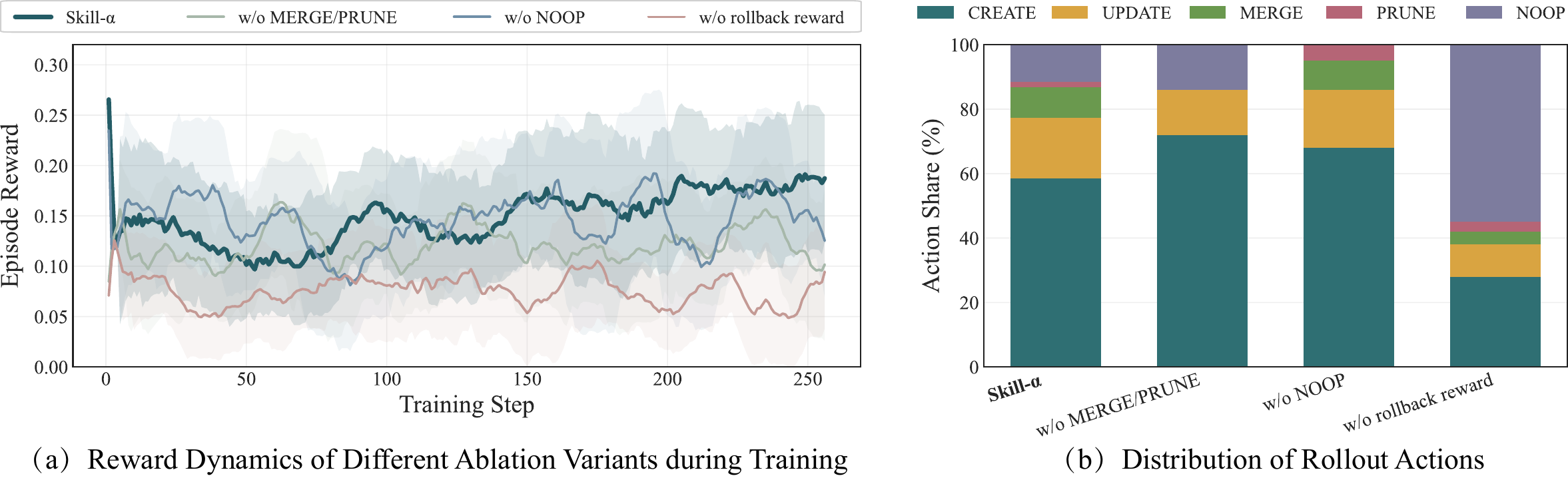}
    \caption{Training dynamics and rollout action distribution under the main ablations. Left: training reward over RL steps for \methodname{} and its RL ablations. Right: action distribution accumulated over rollout steps. ``SFT only'' is omitted because it has no RL rollout trajectory.}
    \vspace{-16pt}
    \label{fig:training_dynamics}
\end{figure}

\subsection{Ablation Study and Training Dynamics}
\noindent
We study four ablations under the same evaluation protocol. ``SFT only'' removes RL and keeps only the supervised warm start. ``w/o rollback reward'' replaces rollback reward with direct verifier reward. ``w/o \textsc{Merge}/\textsc{Prune}'' removes the structural consolidation and deletion actions from the edit space. ``w/o \textsc{Noop}'' removes the abstention action and forces every step to modify the current skill.

\noindent\textbf{Ablation Performance.}
Table~\ref{tab:ablation_results} shows that the full model is strongest on all three benchmarks, which confirms that the benefit does not come from learning the output format alone. Without rollback reward, performance stays close to ``SFT only'', which makes rollback reward the key signal tying a local edit to downstream improvement. Removing \textsc{Merge}/\textsc{Prune} also causes a clear drop, especially on the tau2-bench average, showing that skill generation requires explicit consolidation and deletion rather than continual accumulation alone. Removing \textsc{Noop} still leaves a relatively strong model, which rules out a trivial no-edit explanation, but the full model remains consistently better, indicating that \textsc{Noop} is useful as a calibrated action when the current skill is already adequate.

\begin{wraptable}{r}{0.58\columnwidth}
\centering
\vspace{-0.8em}
\caption{Ablation study. CL-Bench Avg. averages CL-Bench categories, and tau2-bench Avg. averages Airline, Retail, and Telecom. All numbers are pass rates (\%).}
\label{tab:ablation_results}
\input{tables/ablation_results}
\vspace{-0.8em}
\end{wraptable}
\noindent\textbf{Training Dynamics \& Edit Behavior.}
Figure~\ref{fig:training_dynamics} explains the ablation results from a complementary perspective. 
After an initial drop, the reward of full method gradually recovers and reaches the strongest performance in later training. Together with its diverse action distribution, this trend suggests that the model learns an effective editing policy rather than relying on a single high-reward action.
Specifically, \textsc{Create} is the dominant action, but \textsc{Update}, \textsc{Merge}, \textsc{Prune}, and \textsc{Noop} all remain active.
By contrast, removing rollback reward produces a noisier, flatter reward trajectory and an action distribution dominated by \textsc{Noop}, consistent with a conservative no-edit policy under weak credit assignment. Removing \textsc{Merge}/\textsc{Prune} yields short-term reward gains but weaker later-stage behavior, which matches the view that creation alone cannot prevent redundant or conflicting skills from accumulating. Removing \textsc{Noop} still preserves active editing and a relatively strong reward curve, confirming that \methodname{} does not rely on abstention as a shortcut, but its lower downstream scores show that forced edits still introduce avoidable noise.

\subsection{Additional Analysis}
We further examine the generalization, robustness, and properties of \methodname{}. First, we evaluate the generated skills with the additional worker on experience-to-skill benchmarks. \methodname{} still improves over \textsc{No Skill} in the most of the worker-benchmark combinations (Appendix~\ref{app:additional_worker}). Second, using one-shot generation reduces performance, which supports progressive skill generation by showing the limitation of aggregating excessive evidence within a single edit. The generated skills further achieve the lowest internal collision rate and the largest improvement per $1{,}000$ tokens among the other methods (Appendix~\ref{app:skill_generation_analysis}). Third, multi-seed experiments with independently resampled anchored queries, together with comparisons of different training verifiers, show stable gains across seeds, anchor selection, and verifier choices (Appendix~\ref{app:sensitivity}). We also examine the interaction between rollback reward and \textsc{Noop}, showing that \textsc{Noop} provides a further gain under rollback reward (Appendix~\ref{app:rollback_noop_interaction}). We further analyze action distributions and representative examples, showing that the learned policy combines skill creation with other operations rather than relying on continual accumulation or abstention (Appendix~\ref{app:qualitative_case_studies}). Finally, the cost analysis reports the one-time optimization cost of \methodname{} and the skill generation overhead of prompt- and pipeline-based baselines (Appendix~\ref{app:cost_overheads}).

%% file: tables/context_main_results.tex
\begin{table*}[t]
\caption{Document-to-skill results on CL-Bench. Avg. is the unweighted average over the four CL-Bench categories. All numbers are pass rates (\%), and $\uparrow$ indicates that higher is better. Skills are generated from source contexts and then reused on held-out tasks under both worker backbones; Claude-Sonnet-4.5 is used only as a transfer worker. Best and second-best results within each backbone block are in \textbf{bold} and \underline{underlined}.}
\label{tab:context_main_results}
\centering
\small
\begin{adjustbox}{max width=0.8\textwidth}
\begin{tabular}{l|ccccc}
\toprule
\multirow{3}{*}{\textbf{Method}}
& \multicolumn{5}{c}{\textbf{CL-Bench} $\uparrow$}    \\
\cmidrule(lr){2-6}
 & \shortstack{\textit{Rule System}\\\textit{Application}}
 & \shortstack{\textit{Procedural Task}\\\textit{Execution}}
 & \shortstack{\textit{Domain Knowledge}\\\textit{Reasoning}}
 & \shortstack{\textit{Empirical Discovery}\\\textit{\& Simulation}}
 & \textit{Avg.}\\
\midrule
\multicolumn{6}{c}{Backbone: GPT-4o} \\
\midrule
\textsc{No Skill} & \best{21.82} & 4.30 & \second{5.13} & 3.02 & \second{8.57} \\
Anthropic Skill-Creator & 14.55 & \second{5.38} & 4.07 & 4.02 & 7.01 \\
Progressive Prompt Skill & 16.36 & 4.30 & 3.77 & 4.02 & 7.11 \\
AutoSkill & 15.45 & 3.23 & 3.77 & 3.02 & 6.37 \\
Ctx2Skill & 16.36 & 4.30 & 3.02 & \best{5.53} & 7.30 \\
\methodname{}& \second{20.91} & \best{9.68} & \best{5.88} & \second{5.03} & \best{10.38} \\
\midrule
\multicolumn{6}{c}{Backbone: Claude-Sonnet-4.5} \\
\midrule
\textsc{No Skill} & 17.27 & \second{2.15} & 4.83 & 6.53 & 7.70 \\
Anthropic Skill-Creator & 15.45 & 1.08 & 4.68 & \best{8.04} & 7.31 \\
Progressive Prompt Skill & \second{18.18} & \second{2.15} & 4.68 & 6.53 & 7.89 \\
AutoSkill & \best{19.09} & 1.08 & 5.88 & \second{7.54} & \second{8.40} \\
Ctx2Skill & \second{18.18} & \second{2.15} & \second{7.54} & 4.52 & 8.10 \\
\methodname{} & \best{19.09} & \best{3.23} & \best{7.84} & \best{8.04} & \best{9.55} \\
\bottomrule
\end{tabular}
\end{adjustbox}
\vspace{-7pt}
\end{table*}

%% file: tables/experience_main_results.tex
\begin{table*}[t]
\caption{Experience-to-skill results on SpreadsheetBench and tau2-bench. tau2-bench Avg. is the unweighted average over Airline, Retail, and Telecom. All numbers are pass rates (\%), and $\uparrow$ indicates that higher is better. Skills are generated from the shared GPT-4o trajectory pool and then reused on held-out tasks under both worker backbones; Claude-Sonnet-4.5 is used only as a transfer worker. Best and second-best results within each backbone block are in \textbf{bold} and \underline{underlined}.}
\label{tab:experience_main_results}
\centering
\small
\begin{adjustbox}{max width=0.6\textwidth}
\begin{tabular}{l|c|cccc}
\toprule
\multirow{2}{*}{\textbf{Method}} & \multirow{2}{*}{\textbf{SpreadsheetBench} $\uparrow$} & \multicolumn{4}{c}{\textbf{tau2-bench} $\uparrow$} \\
\cmidrule(lr){3-6}
 &  & \textit{Airline} & \textit{Retail} & \textit{Telecom} & \textit{Avg.} \\
\midrule
\multicolumn{6}{c}{Backbone: GPT-4o} \\
\midrule
\textsc{No Skill} & 18.00 & 40.00 & 47.50 & 12.50 & 33.33 \\
Anthropic Skill-Creator & \second{26.00} & \second{55.00} & 60.00 & 7.50 & 40.83 \\
Progressive Prompt Skill & 18.00 & 45.00 & 60.00 & 10.00 & 38.33 \\
ExpeL & 18.50 & \second{55.00} & 70.00 & 12.50 & 45.83 \\
AWM & 20.00 & 40.00 & \second{72.50} & 15.00 & 42.50 \\
Trace2Skill & 19.00 & 45.00 & 67.50 & 12.50 & 41.67 \\
SkillX & 18.50 & 50.00 & \best{80.00} & 10.00 & 46.67 \\
SkillPro & 15.50 & \second{55.00} & \second{72.50} & \second{20.00} & \second{49.17} \\
\methodname{} & \best{27.50} & \best{65.00} & \best{80.00} & \best{22.50} & \best{55.83} \\
\midrule
\multicolumn{6}{c}{Backbone: Claude-Sonnet-4.5} \\
\midrule
\textsc{No Skill} & 26.00 & 65.00 & 85.00 & 40.00 & 63.33 \\
Anthropic Skill-Creator & 20.00 & \best{80.00} & 80.00 & 37.50 & \second{65.83} \\
Progressive Prompt Skill & 13.50 & 55.00 & \second{87.50} & \second{42.50} & 61.67 \\
ExpeL & \second{31.00} & 60.00 & 85.00 & 40.00 & 61.67 \\
AWM & 25.00 & 65.00 & 82.50 & 32.50 & 60.00 \\
Trace2Skill & 20.00 & 70.00 & 85.00 & 32.50 & 62.50 \\
SkillX & \best{31.50} & \second{75.00} & 85.00 & 35.00 & 65.00 \\
SkillPro & 20.50 & 60.00 & \best{90.00} & \best{45.00} & 65.00 \\
\methodname{} & \best{31.50} & \best{80.00} & \second{87.50} & \best{45.00} & \best{70.83} \\
\bottomrule
\end{tabular}
\end{adjustbox}
\vspace{-8pt}
\end{table*}

%% file: tables/bfcl_unseen_domain_results.tex
\begin{table*}[t]
\caption{Generalization to unseen domains on BFCL multi-turn. BFCL is excluded from training, and the trained generator is applied without parameter updates. Avg. is the unweighted average over the eight Sub-APIs. All numbers are pass rates (\%), and $\uparrow$ indicates that higher is better. The settings of skill generation are the same as those in Table~\ref{tab:experience_main_results}. Best and second-best results within each backbone block are in \textbf{bold} and \underline{underlined}.}
\label{tab:bfcl_unseen_domain_results}
\centering
\small
\begin{adjustbox}{max width=0.85\textwidth}
\begin{tabular}{l|ccccccccc}
\toprule
\multirow{2}{*}{\textbf{Method}} & \multicolumn{9}{c}{\textbf{BFCL multi-turn} $\uparrow$} \\
\cmidrule(lr){2-10}
 & \shortstack{GorillaFileSystem} & \shortstack{VehicleControlAPI} & TradingBot & TravelAPI & MessageAPI & TwitterAPI & TicketAPI & MathAPI & \textit{Avg.} \\
\midrule
\multicolumn{10}{c}{Backbone: GPT-4o} \\
\midrule
\textsc{No Skill} & \second{49.00} & 50.00 & \best{72.00} & \best{59.00} & 45.00 & \best{61.84} & \second{51.67} & 60.42 & \second{56.12} \\
Anthropic Skill-Creator & 47.00 & \best{57.00} & 47.00 & 32.00 & 42.50 & 52.63 & 31.67 & \second{62.50} & 46.54 \\
Progressive Prompt Skill & 29.00 & 41.00 & 68.00 & 39.00 & 51.25 & 6.58 & 46.67 & 47.92 & 41.18 \\
ExpeL & 44.00 & \best{57.00} & 57.00 & 47.00 & 47.50 & 40.79 & \second{51.67} & 54.17 & 49.89 \\
AWM & 47.00 & \best{57.00} & 58.00 & 44.00 & \second{52.50} & 47.37 & 43.33 & 54.17 & 50.42 \\
Trace2Skill & 43.00 & 17.00 & 52.00 & 48.00 & 38.75 & 30.26 & 41.67 & 43.75 & 39.30 \\
SkillX & 41.00 & 47.00 & 60.00 & 48.00 & 50.00 & 51.32 & 33.33 & 52.08 & 47.84 \\
SkillPro & 44.00 & \best{57.00} & 65.00 & \second{50.00} & \best{55.00} & 44.74 & 48.33 & 47.92 & 51.50 \\
\methodname{} & \best{51.00} & \second{56.00} & \second{70.00} & 49.00 & 50.00 & \second{53.95} & \best{58.33} & \best{68.75} & \best{57.13} \\
\midrule
\multicolumn{10}{c}{Backbone: Claude-Sonnet-4.5} \\
\midrule
\textsc{No Skill} & 61.00 & 57.00 & \second{78.00} & \best{50.00} & \best{70.00} & 48.68 & 40.00 & \best{66.67} & \second{58.92} \\
Anthropic Skill-Creator & 62.00 & \second{67.00} & 57.00 & 25.00 & 60.00 & \second{51.32} & 18.33 & \second{64.58} & 50.65 \\
Progressive Prompt Skill & 44.00 & 51.00 & \second{78.00} & 32.00 & \second{68.75} & 5.26 & 33.33 & 50.00 & 45.29 \\
ExpeL & 57.00 & 65.00 & 69.00 & 42.00 & 62.50 & 42.11 & 41.67 & 58.33 & 54.70 \\
AWM & \second{63.00} & 64.00 & 66.00 & 39.00 & \second{68.75} & 47.37 & 31.67 & 56.25 & 54.51 \\
Trace2Skill & \best{64.00} & 23.00 & 65.00 & 37.00 & 51.25 & 32.89 & 20.00 & 56.25 & 43.67 \\
SkillX & 54.00 & 59.00 & 72.00 & 39.00 & \best{70.00} & 47.37 & 46.67 & 52.08 & 55.02 \\
SkillPro & 61.00 & 64.00 & 73.00 & 45.00 & \best{70.00} & 46.05 & \second{53.33} & 47.92 & 57.54 \\
\methodname{} & 59.00 & \best{73.00} & \best{82.00} & \second{47.00} & 66.25 & \best{59.21} & \best{56.67} & 60.42 & \best{62.94} \\
\bottomrule
\end{tabular}
\end{adjustbox}
\vspace{-10pt}
\end{table*}

%% file: tables/ablation_results.tex
\centering
\small
\resizebox{0.98\linewidth}{!}{
\begin{tabular}{l|ccc}
\toprule
Variant & CL-Bench Avg. $\uparrow$ & SpreadsheetBench $\uparrow$ & tau2-bench Avg. $\uparrow$ \\
\midrule
\methodname{} & \best{10.38} & \best{27.50} & \best{55.83} \\
SFT only & 3.46 & 15.50 & 44.17 \\
w/o rollback reward & 3.68 & 17.00 & 46.67 \\
w/o \textsc{Merge}/\textsc{Prune} & 4.74 & 20.00 & 39.17 \\
w/o \textsc{Noop} & \second{9.55} & \second{22.00} & \second{53.33} \\
\bottomrule
\end{tabular}
}

%% file: sections/conclusion.tex
\section{Conclusion and Future Work}
We present \methodname{}, a reinforcement learning framework that formulates skill generation as a progressive editing process over a single evolving skill artifact and trains local edit decisions with rollback reward on held-out evaluation anchors. In this way, \methodname{} optimizes skill generation based on how a skill edit changes downstream worker behavior rather than on textual plausibility alone.
Experiments on CL-Bench, SpreadsheetBench, and tau2-bench show that \methodname{} produces more effective skills than prompting-based and pipeline-style baselines in both document-to-skill and experience-to-skill settings, and that these gains transfer across worker backbones. Ablation and training-dynamics analyses further show that the benefit comes from rollback-grounded learning together with the full edit space, while additional analysis shows that progressive generation is robust to evidence reordering but depends more strongly on using a moderate evidence batch size.
A current limitation is that the reward and verifier interface is still benchmark-dependent, and the current skill representation is text-based. Future work can extend it to stronger and more general verifiers, richer multimodal skill formats, and longer-horizon training signals that go beyond local rollback comparisons.

%% file: sections/appendix.tex
The appendix is organized as follows:\\
$\bullet$ Appendix~\ref{app:llm_usage} shows the detailed use of generative artificial intelligence (AI). \\
$\bullet$ Appendix~\ref{app:theory} presents the theoretical analysis of \methodname{}.\\
$\bullet$ Appendix~\ref{app:implementation_details} describes the detailed implementation of \methodname{} and baselines.\\
$\bullet$ Appendix~\ref{app:setup} describes the detailed experimental setup.\\
$\bullet$ Appendix~\ref{app:additional_experiment_results} provides additional experiments on cross-worker transfer, evidence ordering and batching, skill quality, training sensitivity, the interaction between rollback reward and \textsc{Noop}, qualitative edit behavior, and computational overhead.\\
$\bullet$ Appendix~\ref{app:prompts} presents all prompts used in training, evaluation, and skill generation.\\
$\bullet$ Appendix~\ref{app:limitations} discusses the limitations of \methodname{}.

\section{Disclosure of Artificial Intelligence}
\label{app:llm_usage}
In this work, we used generative AI tools to search the literature, identify relevant work, synthesize supervised training data, assist with implementation, and refine the writing. We did not use generative AI tools to propose the method, design the experiments, format the references, or interpret the results. The core research contributions, including the methodology, experiments, and scientific claims, are conceived and determined by the authors. We reviewed all AI-assisted work, including data, code, and text, and take full responsibility for the final content of this work.

\input{sections/appendix/theory}
\input{sections/appendix/implementation_details}
\input{sections/appendix/evaluation_setup}
\input{sections/appendix/additional_experimental_results}

\clearpage
\input{sections/appendix/prompt}
\input{sections/appendix/limitations}

%% file: sections/appendix/theory.tex
\section{Theoretical Analysis}
\label{app:theory}
This section analyzes the rollback reward used in Section~\ref{sec:rollback_grounded_editing}. We first establish its exact pairwise interpretation and then give sufficient conditions under which it preserves ideal-preference and binary-success ordering. We finally characterize the group-dependent \textsc{Noop} reward. The analysis concerns the local ranking signal used by GRPO.

Fix a local editing state, including the current skill, anchored query, worker, verifier, and evaluation protocol, and condition on a sampled group of valid actions. Let $r^{\mathrm{ctrl}}$ be the verifier output under the current skill and $r_i^{\mathrm{edit}}$ the output under the skill produced by a non-\textsc{Noop} action $A^{(i)}$. Following Eq.~\ref{eq:rollback_reward}, its reward is
\begin{equation}
R_i=\mathbf{1}\!\left[r_i^{\mathrm{edit}}>r^{\mathrm{ctrl}}\right].
\label{eq:theory_edit_reward}
\end{equation}

\begin{assumption}
\label{ass:controlled_rollback_sampling}
The control and edited branches use the same anchored query, worker policy, verifier, environment configuration, and sampling protocol. Conditional on the local editing state and candidate skills, the branch rollouts are mutually independent, and the skill condition is the only systematic difference between the control branch and each edited branch.
\end{assumption}

\begin{lemma}
\label{lem:pairwise_interpretation}
For every non-\textsc{Noop} candidate $A^{(i)}$, the expected rollback reward is
\begin{equation}
\mathbb{E}[R_i]
=
\Pr\!\left(r_i^{\mathrm{edit}}>r^{\mathrm{ctrl}}\right).
\label{eq:pairwise_win_probability}
\end{equation}
Thus, one rollback comparison is a Bernoulli sample whose mean is the verifier pairwise win probability of the edited skill against the control skill.
\end{lemma}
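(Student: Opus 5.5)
The argument is short: the expectation of an indicator is the probability of its event. I will fix the local editing state, the anchored query $q^{\mathrm{anc}}$, the worker $\pi_\psi$, the verifier $\mathcal{V}$, and the sampled group of candidate actions. Conditional on these, the candidate skills $z^{(i)}=\mathrm{Edit}(z_{t-1},A^{(i)})$ are deterministic, so the only remaining randomness is in the branch rollouts: $a^{\mathrm{ctrl}}\sim\pi_\psi(\cdot\mid q^{\mathrm{anc}},z_{t-1})$, $a_i^{\mathrm{edit}}\sim\pi_\psi(\cdot\mid q^{\mathrm{anc}},z^{(i)})$, and any internal randomness of the verifier. Hence $r^{\mathrm{ctrl}}$ and $r_i^{\mathrm{edit}}$ are well-defined real random variables on a common probability space. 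By Assumption~\ref{ass:controlled_rollback_sampling}, their joint law is the product of the two marginals induced by the control and edited skill conditions.

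The first step is to observe that the event $\{r_i^{\mathrm{edit}}>r^{\mathrm{ctrl}}\}$ is measurable. It is the preimage of the open half-plane $\{(u,v):v>u\}$ under the measurable map $(r^{\mathrm{ctrl}},r_i^{\mathrm{edit}})$. Therefore $R_i$ in Eq.~\ref{eq:theory_edit_reward} is a $\{0,1\}$-valued random variable, i.e.\ a Bernoulli variable. The second step is the identity $\mathbb{E}[\mathbf{1}_E]=\Pr(E)$, which gives Eq.~\ref{eq:pairwise_win_probability} directly. Since the Bernoulli parameter is $p_i=\Pr(r_i^{\mathrm{edit}}>r^{\mathrm{ctrl}})$, one rollback comparison is a single Bernoulli$(p_i)$ draw.

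To justify the phrase ``pairwise win probability of the edited skill against the control skill,'' I would use independence to write
\[
p_i=\int \Pr\bigl(r_i^{\mathrm{edit}}>u\bigr)\,dF^{\mathrm{ctrl}}(u),
\]
where $F^{\mathrm{ctrl}}$ is the distribution function of the control score. This shows that $p_i$ depends only on the two skill-conditioned score distributions, so it is a property of the pair $(z_{t-1},z^{(i)})$ and not of a particular realization. Ties count as losses, because the inequality in Eq.~\ref{eq:rollback_reward} is strict.

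The only subtle point is bookkeeping rather than mathematics. The control rollout is shared across the group, so the $R_i$ for different $i$ are correlated. The lemma concerns only each marginal $\mathbb{E}[R_i]$, so this correlation does not matter here. I would remark on it explicitly so that later group-level statements about \textsc{Noop} account for it.
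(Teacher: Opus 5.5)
Your proof is correct and matches the paper's: both reduce the claim to the identity $\mathbb{E}[\mathbf{1}_E]=\Pr(E)$ applied to the indicator defining $R_i$ in Eq.~\ref{eq:theory_edit_reward}. The independence from Assumption~\ref{ass:controlled_rollback_sampling} is not needed for this identity (the paper's lemma does not invoke it); it enters only your optional integral representation of $p_i$.
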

\textit{Proof.}
Equation~\ref{eq:theory_edit_reward} defines $R_i$ as the indicator of the event $r_i^{\mathrm{edit}}>r^{\mathrm{ctrl}}$. Taking its expectation gives
\begin{align}
\mathbb{E}[R_i]
=
\mathbb{E}\!\left[\mathbf{1}\!\left[r_i^{\mathrm{edit}}>r^{\mathrm{ctrl}}\right]\right]
=
\Pr\!\left(r_i^{\mathrm{edit}}>r^{\mathrm{ctrl}}\right).
\end{align}
This completes the proof. $\qed$

Let $Y_i^\star\in\{0,1\}$ denote the ideal pairwise judgment on the anchored query, where $Y_i^\star=1$ means that the edited rollout better satisfies the task criterion than the control rollout. Let $p_i^\star=\Pr(Y_i^\star=1)$ denote the corresponding ideal-preference probability. We use the following calibration condition to relate the implemented verifier comparison to this latent judgment.

\begin{assumption}
\label{ass:verifier_calibration}
For a fixed local editing state, the verifier has candidate-independent false-positive and false-negative rates $\eta_+,\eta_-\in[0,1)$ such that
\begin{align}
\Pr(R_i=1\mid Y_i^\star=0)=\eta_+,
\qquad
\Pr(R_i=0\mid Y_i^\star=1)=\eta_-,
\end{align}
for every valid non-\textsc{Noop} candidate, where $\eta_++\eta_-<1$. A verifier that exactly implements the ideal preference is the special case $\eta_+=\eta_-=0$.
\end{assumption}

\begin{lemma}
\label{lem:calibrated_rollback_ranking}
Under Assumption~\ref{ass:controlled_rollback_sampling} and Assumption~\ref{ass:verifier_calibration}, the expected rollback reward satisfies
\begin{equation}
\mathbb{E}[R_i]
=
\eta_+ + (1-\eta_+-\eta_-)p_i^\star.
\label{eq:calibrated_rollback_expectation}
\end{equation}
Consequently, for any two non-\textsc{Noop} candidates,
\begin{align}
\mathbb{E}[R_i]>\mathbb{E}[R_j]
\quad\Longleftrightarrow\quad
p_i^\star>p_j^\star.
\label{eq:ideal_preference_ranking}
\end{align}
\end{lemma}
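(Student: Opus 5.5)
The plan is to compute $\mathbb{E}[R_i]$ by conditioning on the latent ideal judgment $Y_i^\star$, and then read off the ranking equivalence from the resulting affine expression. First, Lemma~\ref{lem:pairwise_interpretation} gives $\mathbb{E}[R_i]=\Pr(R_i=1)$. Second, the law of total probability over the binary event $Y_i^\star\in\{0,1\}$ gives
\begin{align}
\Pr(R_i=1)
=
\Pr(R_i=1\mid Y_i^\star=1)\,p_i^\star
+\Pr(R_i=1\mid Y_i^\star=0)\,(1-p_i^\star).
\end{align}
(If $p_i^\star\in\{0,1\}$, one of the conditionals is undefined, but its term carries zero weight.) Third, substitute Assumption~\ref{ass:verifier_calibration}: $\Pr(R_i=1\mid Y_i^\star=1)=1-\eta_-$ and $\Pr(R_i=1\mid Y_i^\star=0)=\eta_+$. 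This yields $(1-\eta_-)p_i^\star+\eta_+(1-p_i^\star)=\eta_++(1-\eta_+-\eta_-)p_i^\star$, which is Eq.~\ref{eq:calibrated_rollback_expectation}.

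Assumption~\ref{ass:controlled_rollback_sampling} enters only to justify that $Y_i^\star$ and $R_i$ are defined on the same paired control/edited rollouts for the fixed state. It also ensures that the calibration rates apply uniformly, so no dependence on the other candidates in the group leaks into $R_i$ for a non-\textsc{Noop} action. I will note this explicitly: $R_i$ for non-\textsc{Noop} actions depends only on the pair $(r_i^{\mathrm{edit}},r^{\mathrm{ctrl}})$, unlike the \textsc{Noop} reward.

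For the ranking claim, define $f(p)=\eta_++(1-\eta_+-\eta_-)p$. Its slope $1-\eta_+-\eta_-$ is strictly positive by the condition $\eta_++\eta_-<1$. Because $\eta_\pm$ are candidate-independent, the same $f$ applies to every non-\textsc{Noop} candidate at the fixed state. Then $\mathbb{E}[R_i]-\mathbb{E}[R_j]=(1-\eta_+-\eta_-)(p_i^\star-p_j^\star)$, and strict positivity of the slope gives both directions of Eq.~\ref{eq:ideal_preference_ranking}.

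The only delicate point is this candidate-independence together with the strict inequality $\eta_++\eta_-<1$. If the rates varied with $i$, the affine maps would differ and the ordering could flip. I will therefore emphasize that the equivalence is a consequence of the shared calibration in Assumption~\ref{ass:verifier_calibration} rather than of any further structure; everything else is routine algebra.
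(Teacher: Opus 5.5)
Your proposal is correct and follows the paper's proof essentially step for step: you condition on the binary ideal judgment $Y_i^\star$ via the law of total probability, substitute the calibration rates $1-\eta_-$ and $\eta_+$ to obtain the affine form, and read off the ranking equivalence from the strictly positive slope $1-\eta_+-\eta_-$. Your extra remarks on the degenerate cases $p_i^\star\in\{0,1\}$ and on the role of candidate-independent rates are sound refinements that the paper leaves implicit.
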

\textit{Proof.}
Assumption~\ref{ass:controlled_rollback_sampling} ensures that the verifier and ideal judgments refer to the same local comparison. Since $R_i$ is binary, $\mathbb{E}[R_i]=\Pr(R_i=1)$. By the law of total probability and Assumption~\ref{ass:verifier_calibration},
\begin{equation}
\begin{aligned}
\Pr(R_i=1)
&=\Pr(R_i=1\mid Y_i^\star=1)\Pr(Y_i^\star=1)
+\Pr(R_i=1\mid Y_i^\star=0)\Pr(Y_i^\star=0)\\
&=(1-\eta_-)p_i^\star+\eta_+(1-p_i^\star)\\
&=\eta_+ + (1-\eta_+-\eta_-)p_i^\star.
\end{aligned}
\end{equation}
Because $1-\eta_+-\eta_->0$, Eq.~\ref{eq:calibrated_rollback_expectation} is strictly increasing in $p_i^\star$, which proves Eq.~\ref{eq:ideal_preference_ranking}. This completes the proof. $\qed$

\begin{lemma}
\label{lem:expected_binary_rollback}
Suppose the verifier is the exact binary success indicator. Let $\beta_0$ be the worker success probability under the control skill and $\beta_i$ the success probability under the skill produced by $A^{(i)}$. Under Assumption~\ref{ass:controlled_rollback_sampling},
\begin{equation}
\mathbb{E}[R_i]=(1-\beta_0)\beta_i.
\label{eq:expected_binary_rollback}
\end{equation}
Consequently, if $\beta_0<1$, then for any two candidate edits,
\begin{equation}
\mathbb{E}[R_i]>\mathbb{E}[R_j]
\quad\Longleftrightarrow\quad
\beta_i>\beta_j.
\label{eq:rollback_ranking}
\end{equation}
\end{lemma}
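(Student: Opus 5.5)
The plan is to start from Lemma~\ref{lem:pairwise_interpretation}, which already gives $\mathbb{E}[R_i]=\Pr(r_i^{\mathrm{edit}}>r^{\mathrm{ctrl}})$, and then specialize it to a binary verifier. With the exact binary success indicator, both $r^{\mathrm{ctrl}}$ and $r_i^{\mathrm{edit}}$ take values in $\{0,1\}$. The strict inequality $r_i^{\mathrm{edit}}>r^{\mathrm{ctrl}}$ therefore holds exactly on the single outcome $\{r_i^{\mathrm{edit}}=1,\ r^{\mathrm{ctrl}}=0\}$. The ties $(0,0)$ and $(1,1)$ and the reversal $(0,1)$ all give $R_i=0$. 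I will state this case enumeration explicitly, since it is the only place the binary structure enters.

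Next I would compute the probability of that outcome. By definition, $r^{\mathrm{ctrl}}\sim\mathrm{Bernoulli}(\beta_0)$ and $r_i^{\mathrm{edit}}\sim\mathrm{Bernoulli}(\beta_i)$, where both are taken conditional on the fixed local editing state and candidate skills. Assumption~\ref{ass:controlled_rollback_sampling} makes the control and edited branch rollouts mutually independent given this state. Hence
\[
\Pr(r_i^{\mathrm{edit}}=1,\ r^{\mathrm{ctrl}}=0)=\Pr(r_i^{\mathrm{edit}}=1)\,\Pr(r^{\mathrm{ctrl}}=0)=\beta_i(1-\beta_0),
\]
which is Eq.~\ref{eq:expected_binary_rollback}.

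The main subtlety is making sure the independence is used correctly, and I expect this to be the only real obstacle. In the implementation the control score $r_t^{\mathrm{ctrl}}$ is computed once and shared across the whole group. The $R_i$ are therefore correlated across $i$, but each marginal expectation $\mathbb{E}[R_i]$ needs only the pairwise independence of the control branch and branch $i$, and the assumption supplies exactly that. I would also note that $\beta_0,\beta_i$ must be the success probabilities under the common anchored query and protocol, which the same assumption guarantees. Finally, since the candidate skills are held fixed by conditioning, $\beta_i$ is deterministic and no further averaging over edits is needed.

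For the ranking claim, subtracting gives $\mathbb{E}[R_i]-\mathbb{E}[R_j]=(1-\beta_0)(\beta_i-\beta_j)$. The factor $1-\beta_0$ is the same for every candidate, because all candidates in the group share the same control skill and anchored query. When $\beta_0<1$ this factor is strictly positive, so the sign of the difference equals the sign of $\beta_i-\beta_j$, which proves Eq.~\ref{eq:rollback_ranking}. I will close by remarking on the excluded case $\beta_0=1$: there every non-\textsc{Noop} expected reward is zero and the ranking degenerates, which explains the hypothesis.
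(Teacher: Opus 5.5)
Your proposal is correct and matches the paper's proof. You identify $R_i=1$ with the event that the edited rollout succeeds and the control fails, factor its probability as $\beta_i(1-\beta_0)$ using the branch independence in Assumption~\ref{ass:controlled_rollback_sampling}, and obtain the ranking from positivity of the common factor $1-\beta_0$; your remarks on the shared control score and the degenerate case $\beta_0=1$ are accurate but not needed.
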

\textit{Proof.}
For an exact binary verifier, $R_i=1$ precisely when the edited rollout succeeds and the control rollout fails. Assumption~\ref{ass:controlled_rollback_sampling} therefore gives
\begin{align}
\Pr(R_i=1)
&=\Pr(r_i^{\mathrm{edit}}=1)\Pr(r^{\mathrm{ctrl}}=0)\\
&=\beta_i(1-\beta_0).
\end{align}
The common factor $1-\beta_0$ is positive when $\beta_0<1$, which proves Eq.~\ref{eq:rollback_ranking}. This completes the proof. $\qed$

\begin{lemma}
\label{lem:expected_noop_reward}
Let $\mathcal{I}$ be the set of valid non-\textsc{Noop} candidates in a sampled group. If $\mathcal{I}$ is nonempty, then for an arbitrary scalar verifier, the expected \textsc{Noop} reward is
\begin{equation}
\mathbb{E}[R_{\mathrm{noop}}]
=
\Pr\!\left(\max_{i\in\mathcal{I}}r_i^{\mathrm{edit}}\leq r^{\mathrm{ctrl}}\right).
\label{eq:general_noop_reward}
\end{equation}
Under Assumption~\ref{ass:controlled_rollback_sampling}, let $F_i(c)=\Pr(r_i^{\mathrm{edit}}\leq c)$ denote the verifier-score CDF of candidate $i$. Then
\begin{equation}
\mathbb{E}[R_{\mathrm{noop}}]
=
\mathbb{E}_{r^{\mathrm{ctrl}}}\!\left[
\prod_{i\in\mathcal{I}}F_i(r^{\mathrm{ctrl}})
\right].
\label{eq:general_noop_cdf}
\end{equation}
In the exact binary setting of Lemma~\ref{lem:expected_binary_rollback}, this expression reduces to
\begin{equation}
\mathbb{E}[R_{\mathrm{noop}}]
=
\beta_0
+
(1-\beta_0)\prod_{i\in\mathcal{I}}(1-\beta_i).
\label{eq:expected_noop_reward}
\end{equation}
If $\mathcal{I}$ is empty, Eq.~\ref{eq:rollback_reward} assigns reward one to \textsc{Noop} deterministically.
\end{lemma}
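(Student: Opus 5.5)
The plan is to read the \textsc{Noop} reward directly off Eq.~\ref{eq:rollback_reward}, which makes it an indicator, and then compute its expectation by conditioning on the control score.

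\textbf{Step 1: write $R_{\mathrm{noop}}$ as an indicator.} When $\mathcal{I}$ is nonempty, the \textsc{Noop} candidate receives reward one exactly when no valid non-\textsc{Noop} candidate in the group has $r_i^{\mathrm{edit}}>r^{\mathrm{ctrl}}$. This means
\begin{equation*}
R_{\mathrm{noop}}=\mathbf{1}\!\left[\max_{i\in\mathcal{I}} r_i^{\mathrm{edit}}\le r^{\mathrm{ctrl}}\right].
\end{equation*}
Here the maximum is over a finite set, so it is well defined. Taking expectations of an indicator gives Eq.~\ref{eq:general_noop_reward}, in the same way as the proof of Lemma~\ref{lem:pairwise_interpretation}. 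No independence is needed for this step, which is why it holds for an arbitrary scalar verifier.

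\textbf{Step 2: condition on the control score.} I will condition on $r^{\mathrm{ctrl}}=c$ and use the tower property. The maximum is at most $c$ if and only if every $r_i^{\mathrm{edit}}\le c$.

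Assumption~\ref{ass:controlled_rollback_sampling} makes the branch rollouts mutually independent given the local state. The verifier outputs are functions of those rollouts on the same anchored query, so they inherit this independence. The main point to justify is that the edited scores are independent of $r^{\mathrm{ctrl}}$, so that conditioning on $c$ leaves their marginals $F_i$ unchanged. Independence then factorizes the conditional probability into $\prod_{i\in\mathcal{I}}F_i(c)$. Integrating over the law of $r^{\mathrm{ctrl}}$ gives Eq.~\ref{eq:general_noop_cdf}. Formally this is Fubini applied to the product measure, which needs no care because all integrands lie in $[0,1]$.

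\textbf{Step 3: specialize to the exact binary verifier.} In the binary setting of Lemma~\ref{lem:expected_binary_rollback}, $r^{\mathrm{ctrl}}$ is Bernoulli$(\beta_0)$ and $r_i^{\mathrm{edit}}$ is Bernoulli$(\beta_i)$, with CDFs $F_i(1)=1$ and $F_i(0)=1-\beta_i$. Splitting the outer expectation into the two values of $r^{\mathrm{ctrl}}$:
\begin{itemize}
\item $r^{\mathrm{ctrl}}=1$, with probability $\beta_0$, contributes $\beta_0\cdot 1$;
\item $r^{\mathrm{ctrl}}=0$, with probability $1-\beta_0$, contributes $(1-\beta_0)\prod_{i\in\mathcal{I}}(1-\beta_i)$.
\end{itemize}
The sum is Eq.~\ref{eq:expected_noop_reward}.

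\textbf{Step 4: the empty case.} If $\mathcal{I}$ is empty, the clause ``no other valid sampled edit achieves a larger verifier output'' is vacuously true. Equation~\ref{eq:rollback_reward} therefore assigns reward one deterministically. This is consistent with the convention that an empty product equals one.

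The only real obstacle is Step 2, namely arguing cleanly that Assumption~\ref{ass:controlled_rollback_sampling} yields independence of the scores, not just of the rollouts. I would handle it by noting that each score is a measurable function of its own branch rollout. That holds provided the verifier introduces no shared randomness across branches, or its randomness is included in the branch rollout, which the shared-protocol clause of the assumption permits.
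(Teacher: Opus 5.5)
Your proposal is correct and follows the same route as the paper's proof. You write the \textsc{Noop} reward as the indicator that every edited score is at most the control score, condition on $r^{\mathrm{ctrl}}=c$, factorize via the independence in Assumption~\ref{ass:controlled_rollback_sampling}, and split the binary case on whether the control succeeds. You also add refinements the paper leaves implicit: score-level (not just rollout-level) independence is needed, so any verifier randomness must be branch-specific; Fubini justifies conditioning on a possibly continuous control score; and the empty case agrees with the empty-product convention.
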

\textit{Proof.}
Equation~\ref{eq:rollback_reward} rewards \textsc{Noop} precisely when $r_i^{\mathrm{edit}}\leq r^{\mathrm{ctrl}}$ for every $i\in\mathcal{I}$, which gives Eq.~\ref{eq:general_noop_reward}. Conditional on $r^{\mathrm{ctrl}}=c$, Assumption~\ref{ass:controlled_rollback_sampling} gives
\begin{align}
\Pr\!\left(\max_{i\in\mathcal{I}}r_i^{\mathrm{edit}}\leq c\right)
=
\prod_{i\in\mathcal{I}}F_i(c).
\end{align}
Taking the expectation over the control score proves Eq.~\ref{eq:general_noop_cdf}. For an exact binary verifier, a successful control cannot be exceeded and contributes probability $\beta_0$. When the control fails, \textsc{Noop} is rewarded only if every edited rollout also fails, which contributes $(1-\beta_0)\prod_{i\in\mathcal{I}}(1-\beta_i)$. This proves Eq.~\ref{eq:expected_noop_reward} and completes the proof. $\qed$

Assumption~\ref{ass:controlled_rollback_sampling} makes the rollback comparison consistent across the control and edited branches. Lemma~\ref{lem:pairwise_interpretation} identifies its exact expected quantity, while Assumption~\ref{ass:verifier_calibration} and Lemma~\ref{lem:calibrated_rollback_ranking} give sufficient conditions for this quantity to preserve ideal-preference ordering. Lemma~\ref{lem:expected_binary_rollback} provides an exact specialization for binary task success, and Lemma~\ref{lem:expected_noop_reward} separately characterizes \textsc{Noop} as a group-level fallback rather than an estimator of edit quality. Together, these results justify rollback reward for its role in local group-relative ranking. Ties and degradations both receive zero reward, so the reward is not a signed estimate of score improvement, and one comparison remains a noisy sample of the corresponding win probability. Independent branches are sufficient for the expectations above, while repeated comparisons can reduce variance without changing the estimand.

%% file: sections/appendix/implementation_details.tex
\section{Implementation Details}
\label{app:implementation_details}

\subsection{\methodname{} Setup}
\label{app:skillalpha_impl}
\noindent\textbf{Policy and Action Format.}
\methodname{} uses the shared skill-generation interface in Table~\ref{tab:skill_alpha_template} for both document-to-skill and experience-to-skill. The policy receives the current \texttt{SKILL.md} and serialized evidence, then emits a thinking process and exactly one structured action. Table~\ref{tab:appendix_skill_alpha_actions} specifies the required fields and editing semantics of the five actions. The action applier normalizes headings and applies the sampled edit to the current Markdown skill without rewriting it into another action.

\begin{table}[t]
\centering
\small
\begin{tabular}{p{0.16\linewidth}p{0.31\linewidth}p{0.39\linewidth}}
\toprule
Action & Required fields & Intended effect \\
\midrule
\textsc{Create} & \texttt{sections} & add one or more missing reusable sections \\
\textsc{Update} & \texttt{target\_title}, \texttt{new\_content} & replace one existing section as a whole \\
\textsc{Merge} & \texttt{source\_titles}, \texttt{merged\_title}, \texttt{merged\_content} & consolidate overlapping sections \\
\textsc{Prune} & \texttt{target\_title} & remove harmful or redundant content \\
\textsc{Noop} & --- & keep the current skill unchanged \\
\bottomrule
\end{tabular}
\caption{Structured action interface for \methodname{}. \textsc{Create} supports multi-section creation; \textsc{Update}, \textsc{Merge}, and \textsc{Prune} operate on existing sections; and \textsc{Noop} preserves the current artifact.}
\label{tab:appendix_skill_alpha_actions}
\vspace{-16pt}
\end{table}

\noindent\textbf{Evidence and Anchored Queries.}
Anchored queries are reward-side metadata used only during RL training. They are never serialized into the evidence $x_t$, exposed to the skill generator $\pi_\phi$, or used during post-training skill generation. The same edit policy is trained across both evidence regimes, while benchmark-specific logic appears only in evidence serialization and reward evaluation. For CL-Bench, evidence is serialized from training contexts. Short contexts are edited directly, while long contexts are processed as an ordered sequence of natural context segments; after each segment, the current skill is updated and passed to the next step. Each RL state is paired with a training query whose task reference belongs to the same source context. We use the task reference attached to the state without filtering for an initially failed response, and GPT-5.5 scores the control and edited answers under the corresponding official rubric.
For SpreadsheetBench and tau2-bench, evidence is serialized from training trajectories, tool observations, final outcomes, and feedback. We construct the anchor pool from unsuccessful or partially successful trajectories in the same source group, identified by an unsuccessful final status or a score below one, and shuffle this pool before selecting anchors. The anchor is used only to compare the control and edited skills through the benchmark environment. This selection provides local headroom for distinguishing candidate edits while keeping the anchor in the same task family as the evidence. Each experience batch contains four source traces in their original order, each trace contributes at most eight steps and $3{,}000$ characters, and the policy prompt is capped at $8{,}192$ tokens. No final test query is used as an RL anchor.
At evaluation time, BFCL multi-turn uses the same experience serialization and progressive editing interface, with each Sub-API treated as a separate source group. Its source trajectories are processed with the same trace and context bounds, but no BFCL anchor, reward, or parameter update is used because BFCL is excluded from training.

\noindent\textbf{Training.}
Training uses two stages. We initialize the policy from instruction-tuned Qwen3-8B and warm it up before RL. The warm-up data is synthesized by DeepSeek-V4-Pro from the training splits of CL-Bench, SpreadsheetBench, and tau2-bench. After filtering, the SFT corpus contains $6{,}481$ examples, including $2{,}188$ from CL-Bench, $2{,}152$ from SpreadsheetBench, $730$ from tau2 Airline, $708$ from tau2 Retail, and $703$ from tau2 Telecom. The action distribution is $2{,}154$ \textsc{Create} ($33.24\%$), $1{,}609$ \textsc{Noop} ($24.83\%$), $1{,}238$ \textsc{Update} ($19.10\%$), $964$ \textsc{Prune} ($14.87\%$), and $516$ \textsc{Merge} ($7.96\%$). Every demonstration follows the shared \texttt{<think>}+\texttt{<action>} format. SFT uses $8$ GPUs, a training batch size of $32$, a micro-batch size of $1$ per GPU, a maximum sequence length of $32{,}768$, a learning rate of $5\times 10^{-6}$, a weight decay of $0.01$, a warmup ratio of $0.05$, cosine scheduling, and $3$ epochs. The resulting Qwen3-8B checkpoint initializes GRPO.
We generate progressive skill-construction records once offline using the GPT-4o Progressive Prompt Skill baseline on the training splits. We randomly sample local edit states according to benchmark-specific quotas, requiring each source trace in experience-based states to contain at most $3{,}000$ characters. Each state pairs the skill before an update with its corresponding evidence batch, an anchored query, and its reward protocol. The resulting dataset contains $2{,}048$ states: $768$ from CL-Bench, $512$ from SpreadsheetBench, and $256$ from each tau2-bench domain. These states remain fixed throughout RL. Only the current skill and evidence batch are provided to the editing policy; the anchored query and verifier are used exclusively for reward evaluation.
During RL, the policy samples a group of candidate edits, applies each edit deterministically, and evaluates the resulting candidate skill on the same anchored query used by the control branch. A non-\textsc{Noop} candidate receives positive rollback reward only when its verifier output exceeds the control output, while \textsc{Noop} is rewarded only when no valid sampled edit does so.

\noindent\textbf{Deterministic Quality Gates.}
During RL training, sampled candidate edits pass a set of deterministic quality checks before reward evaluation. Hard failures include malformed action schemas, missing section targets, duplicate headings, empty post-edit skills after non-\textsc{Noop} actions, and hidden create-or-merge behavior expressed through the wrong action type. A \textsc{Noop} action on an empty current skill is also rejected immediately. Failing candidates receive reward $0$ without benchmark evaluation. These gates filter invalid training candidates and are not applied during final skill generation.

\noindent\textbf{Workers and Verifiers.}
Rollback reward is always computed using GPT-4o as the fixed worker across RL training queries. Final benchmark reporting is separate: the main paper evaluates generated skills with GPT-4o as the same-worker backbone and Claude-Sonnet-4.5 as the transfer backbone. For CL-Bench, GPT-5.5 follows the official task-specific rubric protocol shown in Appendix~\ref{app:reward_evaluation_prompts}. Each rubric item is judged as binary, and the fraction of satisfied items gives a continuous verifier reward in $[0,1]$. For SpreadsheetBench and tau2-bench, the verifier is the benchmark-side environment feedback on the same anchored query and returns a binary reward in $\{0,1\}$.

\subsection{Baseline Setup}
\label{app:baseline_impl}
\noindent\textbf{Common Settings.}
All skill-generation baselines receive the same source contexts or GPT-4o trajectory pool used by \methodname{} and construct one artifact for each source context or experience group. We preserve each method's native prompt package, artifact representation, update procedure, and selection rule; benchmark wrappers only serialize the shared evidence and materialize the resulting native artifact. Where a method requires an LLM skill writer, we use GPT-4o for a controlled construction backbone. Except for the internal judge defined by Ctx2Skill, baseline generation does not query anchored tasks, benchmark-side verifiers, held-out answers, or test feedback.
For BFCL multi-turn, each Sub-API defines one experience group. All methods receive the same GPT-4o trajectories from its source base tasks and apply the same trajectory ordering, bounded serialization, chunking, and method-native artifact construction used for SpreadsheetBench and tau2-bench. BFCL is used only after training and does not update any generator or baseline component.

\noindent\textbf{No Skill.}
\textsc{No Skill} applies no skill-generation procedure. It runs the original CL-Bench messages or the official SpreadsheetBench, tau2-bench, and BFCL task interface without external guidance, providing the worker-only reference for each backbone.

\noindent\textbf{Anthropic Skill-Creator.}
We use the complete Anthropic Skill-Creator instructions as the meta prompt and GPT-4o as the skill generator, with temperature $0.1$ and a maximum output length of $2{,}600$ tokens. On CL-Bench, the $16{,}384$-character context limit reserves $2{,}500$ characters for prompt overhead, leaving $13{,}884$ characters for each source chunk. Contexts are divided at paragraph boundaries, with hard splitting for an overlength segment, and chunk-level skills are generated independently and merged once into one context-level \texttt{SKILL.md}. On SpreadsheetBench, tau2-bench, and BFCL, each trajectory retains at most eight steps and $5{,}000$ characters under the same context limit, and successful trajectories precede the remaining trajectories. Overlength groups are divided into bounded chunks and their partial skills are recursively merged. The final run uses this chunked procedure rather than the optional full-context mode and performs no parameter updates, rollback comparisons, or verifier queries during generation.

\noindent\textbf{Progressive Prompt Skill.}
This baseline replaces the learned editor with a fixed GPT-4o prompt, using temperature $0.1$ and a maximum output length of $2{,}600$ tokens. At each step, the prompt receives the current skill and the next bounded evidence chunk and returns a revised full skill. CL-Bench uses the same $16{,}384$-character context limit, $2{,}500$-character prompt reserve, and $13{,}884$-character source-chunk budget as Anthropic Skill-Creator, with contexts processed in their original paragraph order. On SpreadsheetBench, tau2-bench, and BFCL, each trajectory retains at most eight steps and $5{,}000$ characters under the $16{,}384$-character limit, and successful trajectories precede the remaining trajectories. The last revision is used directly as the final skill, without rollback reward, parameter updates, candidate sampling, selection, or a separate merge call.

\noindent\textbf{AutoSkill.}
We evaluate the native AutoSkill4Doc pipeline on CL-Bench and use GPT-4o as its extraction model. Each source context is exported as one Markdown document and processed by the original document ingestion, extraction, compilation, and version-registration stages. We use the default chunk extraction strategy with a $16{,}384$-character context limit, $3{,}500$ characters per section and extraction chunk, a $300$-character overlap, at most two candidates per unit, and an $8{,}192$-token extraction output limit. Other extraction and retry settings remain at their defaults. The compiled artifact is materialized as one context-level \texttt{SKILL.md}; no task query, rubric, or verifier feedback is available during extraction.

\noindent\textbf{Ctx2Skill.}
We run the native Ctx2Skill pipeline on CL-Bench and preserve its multi-agent self-play, candidate improvement, and selection procedure. Long contexts are divided under its $16{,}384$-character context limit before self-play. The full setting uses two workers, five self-play iterations, and five generated tasks per chunk. GPT-4o serves as the challenger, reasoner, proposer, and skill generator, while the method retains the default internal judge and fallback configured by its launcher. The original API calls do not explicitly set temperature or an output-token limit and therefore use the endpoint defaults. The internal judge operates only within Ctx2Skill's native self-play loop and does not receive CL-Bench task rubrics or held-out benchmark outcomes. Chunk-level outputs are merged into one context-level \texttt{SKILL.md}, with no additional selection outside the native pipeline.

\noindent\textbf{Trace2Skill.}
We run Trace2Skill with its native trajectory-patch construction and conflict-free consolidation structure, using GPT-4o as the skill writer with temperature $0.1$ and a maximum output length of $2{,}600$ tokens. Source trajectories are ordered with successful executions first, and each trajectory retains at most eight steps and $5{,}000$ characters under a $16{,}384$-character context limit. Each evidence chunk produces a Markdown patch artifact containing local success and failure lessons, workflow updates, and verification checks. When multiple chunks are required, GPT-4o recursively consolidates their artifacts. The final artifact is one \texttt{SKILL.md} per source group; trajectories are neither pooled across groups nor taken from the test split.

\noindent\textbf{ExpeL.}
We run ExpeL with its native experience-distillation and retrieval procedure, using GPT-4o as the skill writer with temperature $0.1$ and a maximum output length of $2{,}600$ tokens. Source trajectories are ordered with successful executions first, and each trajectory retains at most eight steps and $5{,}000$ characters under a $16{,}384$-character context limit. Its prompt converts each evidence chunk into natural-language rules and reusable insights, and recursively merges chunk-level rule sets when needed. At inference, \texttt{all-mpnet-base-v2} semantic retrieval selects the top two successful source experiences and inserts them with the distilled rules under a $12{,}000$-character cap. Retrieval never accesses another domain or a held-out trajectory.

\noindent\textbf{Agent Workflow Memory (AWM).}
We run AWM with its native offline workflow-induction procedure, using GPT-4o with temperature $0.1$ and a maximum output length of $2{,}600$ tokens to produce reusable summary workflows. Source trajectories are ordered with successful executions first, and each trajectory retains at most eight steps and $5{,}000$ characters under a $16{,}384$-character context limit. Overlength groups are handled by recursively merging their chunk-level workflow artifacts. At inference, the induced workflow memory is accompanied by the single most relevant successful same-group exemplar retrieved with \texttt{all-mpnet-base-v2}, under a $10{,}000$-character cap. This preserves AWM's workflow-plus-exemplar inference pattern without cross-domain retrieval.

\noindent\textbf{SkillX.}
We run SkillX with its native three-level library of planning, functional, and atomic skills, using GPT-4o as the skill writer with temperature $0.1$ and a maximum output length of $5{,}000$ tokens after converting the shared trajectories to its input schema. Source trajectories are ordered with successful executions first, and each trajectory retains at most eight steps and $5{,}000$ characters under a $16{,}384$-character context limit. Structured generations are retried up to three times when JSON validation fails. For overlength groups, chunk-level libraries are merged deterministically by deduplicating entries and retaining at most 25 items in each level. At inference, task-conditioned lexical ranking selects at most ten entries under a $12{,}000$-character cap. It prioritizes atomic and tool-centric entries for tau2-bench and BFCL, while SpreadsheetBench combines one planning entry with functional and atomic entries. The selected view is materialized as promptable guidance while the hierarchical JSON library remains the native artifact.

\noindent\textbf{SkillPro.}
We use the official SkillPro implementation and preserve its complete optimization and inference pipeline. Given the shared source trajectories, SkillPro represents each procedural option with initiation, policy, and termination conditions. Its Non-Parametric PPO pipeline extracts and aggregates semantic gradients from trajectories, generates candidate options, evaluates them with the native PPO Gate, selects the best valid candidate, and maintains the option pool using the original score-based update and pruning mechanism. At inference, the worker uses SkillPro's native option-selection procedure to retrieve applicable guidance from the resulting pool. We retain the official default settings throughout. For consistency with the other baselines, we only use GPT-4o as the LLM backend and serialize the shared source trajectories into the input format expected by the official implementation; neither change alters the SkillPro algorithm.

%% file: sections/appendix/evaluation_setup.tex
\section{Evaluation Setup}
\label{app:setup}

\noindent\textbf{Benchmarks and Splits.}
The experiments cover one document/context benchmark and three trajectory/experience benchmarks. Table~\ref{tab:appendix_clbench_split} gives the CL-Bench split by context family, and Table~\ref{tab:appendix_experience_split} summarizes the source/test construction for SpreadsheetBench, tau2-bench, and BFCL multi-turn.

\begin{table}[t]
\centering
\small
\begin{tabular}{p{0.58\linewidth}rr}
\toprule
CL-Bench split & Contexts & Tasks \\
\midrule
Rule System Application (source) & 112 & 456 \\
Procedural Task Execution (source) & 80 & 378 \\
Rule System Application (held-out) & 28 & 110 \\
Procedural Task Execution (held-out) & 20 & 93 \\
Domain Knowledge Reasoning (OOD) & 190 & 663 \\
Empirical Discovery \& Simulation (OOD) & 70 & 199 \\
\bottomrule
\end{tabular}
\caption{CL-Bench split used in this paper. The skill generator is trained only on source contexts from Rule System Application and Procedural Task Execution.}
\label{tab:appendix_clbench_split}
\end{table}

\begin{table}[t]
\centering
\small
\begin{tabular}{p{0.46\linewidth}rr}
\toprule
Benchmark split & Source tasks & Held-out tasks \\
\midrule
SpreadsheetBench & 200 & 200 \\
tau2 Airline & 30 & 20 \\
tau2 Retail & 74 & 40 \\
tau2 Telecom & 74 & 40 \\
BFCL multi-turn & 100 & 100 \\
\bottomrule
\end{tabular}
\caption{Experience-benchmark splits. SpreadsheetBench uses a stratified 200/200 split of the original 275 cell-level and 125 sheet-level tasks, tau2-bench follows the official domain-wise split, and BFCL reports unique base tasks. Table~\ref{tab:appendix_bfcl_split} details the BFCL variant entries and Sub-API composition.}
\label{tab:appendix_experience_split}
\vspace{-16pt}
\end{table}

\input{tables/bfcl_split_details}

\noindent\textbf{CL-Bench Protocol.}
CL-Bench is used for document-to-skill evaluation. Training uses only Rule System Application and Procedural Task Execution as source context families. At test time, each held-out context induces one generated skill, and that skill is reused across the held-out tasks under the same context. Held-out contexts from the two source families form the in-domain evaluation, while Domain Knowledge Reasoning and Empirical Discovery \& Simulation form the OOD evaluation. Skill generation may access only the source context itself; held-out task answers, task rubrics, and execution traces are never exposed during skill construction. Final CL-Bench pass/fail decisions are produced by GPT-5.5 under the official task-specific rubrics.

\noindent\textbf{SpreadsheetBench, tau2-bench, and BFCL Protocol.}
SpreadsheetBench is used to evaluate spreadsheet-operation skill acquisition from execution experience. We use the 400-task benchmark and construct a stratified 200/200 source/test split that preserves the original ratio between cell-level and sheet-level tasks. Source tasks are used only to collect trajectories, tool observations, and feedback for skill construction, while held-out tasks are evaluated with the full benchmark runner.
tau2-bench is used to evaluate domain workflow skill reuse from experience. We use the Airline, Retail, and Telecom domains and follow the official domain split shown in Table~\ref{tab:appendix_experience_split}. Source tasks provide the experience pool for skill construction, and held-out tasks are used only for final evaluation. We do not include \texttt{banking\_knowledge} because it does not provide a comparably clean train/test split.
BFCL multi-turn is used only for unseen-domain evaluation. We include its \texttt{multi\_turn\_base}, \texttt{multi\_turn\_long\_context}, \texttt{multi\_turn\_miss\_func}, and \texttt{multi\_turn\_miss\_param} variants, which share the same 200 underlying base tasks. We split the base tasks, rather than the 800 entries, into 100 source and 100 held-out tasks using multilabel stratification over \texttt{involved\_classes}. Each base task contributes one entry to each variant, yielding 100 entries per variant and 400 entries in each split. Source and held-out sets have zero overlap at both the base-task and entry levels, and all variants of a base task remain in the same split. Because some tasks involve two APIs, the Sub-API counts in Table~\ref{tab:appendix_bfcl_split} overlap and exceed the number of unique tasks. Source tasks are used only for GPT-4o trajectory collection and skill generation, while held-out tasks are reserved for final evaluation. No BFCL task, trajectory, feedback, or metric is used for SFT, GRPO, hyperparameter tuning, model selection, or reward design.

\noindent\textbf{Baselines.}
We report four baseline families. The control baseline is \textsc{No Skill}. Strong prompt baselines are Anthropic Skill-Creator and Progressive Prompt Skill, both of which can operate on document/context evidence and trajectory/experience evidence through prompting alone. The document-to-skill baselines are Ctx2Skill and AutoSkill. The experience-to-skill baselines are ExpeL, AWM, Trace2Skill, SkillX, and SkillPro. All reported baselines are evaluated under the same source-evidence and held-out-task protocol as \methodname{}.

\noindent\textbf{Fair Evaluation Protocol.}
For each source unit, every method receives the same source evidence and is evaluated on the same held-out tasks after injecting the generated skill into the worker agent. Document-based methods may use only source contexts, while experience-based methods may use only source trajectories, tool observations, and execution feedback. For CL-Bench, the direct-context control receives the raw context without an external generated skill. For SpreadsheetBench and tau2-bench, the source trajectories are sampled once by GPT-4o on the benchmark training split, and this trajectory source split is exactly the one used to train \methodname{}. For BFCL, trajectories are sampled from the OOD source split only after training and are used without updating the generator. All automatic experience-to-skill baselines consume the same benchmark-specific GPT-4o trajectory pool. Retrieval-style methods are also restricted to the same source group, so the comparison does not benefit from cross-domain leakage.
Main tables report GPT-4o same-worker evaluation and Claude-Sonnet-4.5 cross-worker transfer. Unless a baseline intrinsically defines another construction procedure, skill artifacts are generated once with GPT-4o and then reused for both worker backbones; Claude-Sonnet-4.5 is used only as a downstream worker and does not regenerate the skills. For CL-Bench, the final skill is appended to the original benchmark messages and the answer is graded by GPT-5.5 with the official task-specific rubrics. For SpreadsheetBench, tau2-bench, and BFCL, the final skill is inserted above the official benchmark task prompt while the original environment harness, action parser, and checker remain unchanged. The local anchored-query comparison described in Section~\ref{sec:rollback_grounded_editing} is used only to compute RL training rewards, not the reported test metrics. No held-out queries, test rubrics, test trajectories, or test feedback are used for skill construction, reward design, or model selection. All methods are therefore compared under the same benchmark-side success criteria and worker-side prompting interface.

%% file: tables/bfcl_split_details.tex
\begin{table}[t]
\caption{BFCL multi-turn split by Sub-API. Each base task has four entries corresponding to \texttt{multi\_turn\_base}, \texttt{multi\_turn\_long\_context}, \texttt{multi\_turn\_miss\_func}, and \texttt{multi\_turn\_miss\_param}. Sub-API assignments are multilabel and are not additive.}
\label{tab:appendix_bfcl_split}
\centering
\small
\begin{adjustbox}{max width=0.8\linewidth}
\begin{tabular}{lrrrr}
\toprule
\textbf{Sub-API} & \textbf{Source Base Tasks} & \textbf{Test Base Tasks} & \textbf{Source Entries} & \textbf{Test Entries} \\
\midrule
GorillaFileSystem & 25 & 25 & 100 & 100 \\
VehicleControlAPI & 25 & 25 & 100 & 100 \\
TradingBot & 25 & 25 & 100 & 100 \\
TravelAPI & 25 & 25 & 100 & 100 \\
MessageAPI & 20 & 20 & 80 & 80 \\
TwitterAPI & 20 & 19 & 80 & 76 \\
TicketAPI & 16 & 15 & 64 & 60 \\
MathAPI & 13 & 12 & 52 & 48 \\
\midrule
\textbf{Unique total} & \textbf{100} & \textbf{100} & \textbf{400} & \textbf{400} \\
\bottomrule
\end{tabular}
\end{adjustbox}
\vspace{-16pt}
\end{table}

%% file: sections/appendix/additional_experimental_results.tex
\section{Additional Experimental Results}
\label{app:additional_experiment_results}

\subsection{Additional Worker Backbone}
\label{app:additional_worker}
\input{tables/additional_worker_backbone}

Table~\ref{tab:additional_worker_backbone} evaluates the skills constructed from GPT-4o trajectories using DeepSeek-V4-Flash~\citep{xu2026deepseek} as an additional downstream worker. \methodname{} obtains the highest average across the four settings ($65.63$) and the highest tau2-bench average ($72.50$). Relative to \textsc{No Skill}, it improves SpreadsheetBench by $14.00$ points, Retail by $7.50$, and Telecom by $12.50$, while matching the strong Airline result of $80.00$. The generated skills thus remain useful when transferred to a worker that did not produce the source trajectories.

Together with the GPT-4o and Claude-Sonnet-4.5 results in Table~\ref{tab:experience_main_results}, \methodname{} improves over \textsc{No Skill} in $11$ of the $12$ worker--benchmark combinations and ties in the remaining one. It also achieves the highest four-setting macro average for every worker. This consistency indicates that the learned editor produces transferable skills rather than guidance tailored to one worker's prompting behavior.

\subsection{Skill Generation Analysis}
\label{app:skill_generation_analysis}

\textbf{Evidence Order, Batchsize, and Generation Strategy.}
Table~\ref{tab:analysis_results} first varies evidence order with the batchsize fixed at $4$. Source order preserves the original sequence, shuffled order randomly permutes the same evidence units, and reverse order inverts the sequence. 
\begin{wraptable}{r}{0.5\columnwidth}
\centering
\vspace{-0.8em}
\caption{Analysis of evidence order, batc hsize, and one-shot generation. The one-shot variant uses the same trained generator but generates skill in one step. tau2-bench Avg. averages Airline, Retail, and Telecom. All numbers are pass rates (\%).}
\label{tab:analysis_results}
\input{tables/analysis_results}
\vspace{-0.8em}
\end{wraptable}
The differences are modest, showing that \methodname{} does not depend on a particular trajectory order. The lower tau2-bench result under shuffled order nevertheless suggests that local continuity can help when adjacent evidence units describe related parts of a workflow.
The same table then varies batchsize while preserving source order. A batchsize of $4$ performs best, while smaller batches expose too little evidence for reusable abstraction and larger batches mix more patterns into a single edit. We further compare this default progressive setting with one-shot generation using the same trained generator and per-call context limit. One-shot generation reaches only $13.50$ on SpreadsheetBench and $45.00$ on the tau2-bench average, trailing progressive generation by $14.00$ and $10.83$ points, respectively. Processing all evidence in one step places evidence selection, conflict resolution, and skill organization within a single generation context, making relevant procedures easier to omit or conflate. Progressive generation instead distributes these operations across focused edits while preserving the evolving skill state, directly supporting its role in the proposed framework.

\input{tables/skill_quality_results}
\textbf{Skill Quality.}
Table~\ref{tab:skill_quality_results} evaluates the internal structure of generated skills. Duplicate Heading reports repeated normalized headings, while Redundant Section measures the fraction of sections involved in a character 3-gram pair with cosine similarity above $0.75$. Internal Collision uses GPT-5.5~\citep{gpt55} to identify duplicate, overlapping, or conflicting section pairs. Approx. Tokens and Sections describe artifact size.
Although \methodname{} produces the largest artifacts, it achieves the lowest Internal Collision rate. This result shows that a skill can retain more procedural content without introducing greater internal interference when its structure is actively maintained during generation.
The baselines fail in different ways. Anthropic Skill-Creator remains concise but contains more semantic collisions, whereas Progressive Prompt Skill and Trace2Skill exhibit greater redundancy or fragmentation. Artifact length alone also does not explain the downstream gains. Normalizing the average improvement over \textsc{No Skill} across the four GPT-4o experience settings by artifact length, \methodname{} yields $6.49$ pass-rate points per $1{,}000$ approximate tokens, compared with $5.34$ for Anthropic Skill-Creator, $3.13$ for Trace2Skill, and $1.72$ for Progressive Prompt Skill. It therefore provides the largest gain per unit length while retaining more comprehensive procedural content.

\subsection{Sensitivity}
\label{app:sensitivity}
\noindent\textbf{Multi-Seed and Anchored Query.}
\input{tables/multiseed_variability}
Table~\ref{tab:multiseed_variability} reports the mean and standard deviation over three trials with GPT-4o as the worker. Each trial changes the optimization seed and independently resamples anchored queries from the corresponding source groups. \methodname{} achieves the highest mean in all three tau2-bench domains and raises the tau2-bench average from $48.61 \pm 2.55$ for the strongest baseline to $56.67 \pm 0.83$. The consistent gains across Airline, Retail, and Telecom show that the result is not driven by one seed or a favorable set of anchors.

On SpreadsheetBench, \methodname{} reaches $26.67 \pm 0.76$, closely matching Anthropic Skill-Creator at $26.17 \pm 0.29$ while substantially outperforming Trace2Skill and SkillPro. The strongest baseline differs across the two benchmarks, but \methodname{} remains at the top on both. Its low variability under jointly changed seeds and anchors further indicates that rollback training is robust to anchor selection.

\noindent\textbf{Verifiers.}
\input{tables/verifier_sensitivity}
Table~\ref{tab:verifier_sensitivity} varies only the rubric-based verifier used for CL-Bench rollback rewards and keeps the final GPT-5.5 evaluation protocol fixed. The three variants have similar averages across the two workers and four categories, and all improve over \textsc{No Skill}. Rollback-trained editing is therefore robust in aggregate to the choice among these strong verifiers, although the best verifier varies by category.

\subsection{Interaction Between Rollback Reward and \textsc{Noop}}
\label{app:rollback_noop_interaction}
Unlike content-changing actions, \textsc{Noop} receives the group-dependent reward defined in Eq.~\ref{eq:rollback_reward}. Thus, we isolate how its effect interacts with the choice of reward. The main ablation in Table~\ref{tab:ablation_results} already reports three of the four combinations. The full model uses rollback reward with \textsc{Noop}, ``w/o \textsc{Noop}'' retains rollback reward but removes \textsc{Noop}, and ``w/o rollback reward'' uses direct reward with \textsc{Noop}. Table~\ref{tab:rollback_noop_interaction} completes the comparison by adding direct reward without \textsc{Noop}. All four variants share the same initialization, training data, and evaluation protocol.

\input{tables/rollback_noop_interaction}

Rollback reward improves performance under both action spaces. With \textsc{Noop}, it outperforms direct reward. After removing \textsc{Noop}, the corresponding gains remain $4.42$, $3.00$, and $5.00$ points. Thus, the benefit of rollback reward persists without group-dependent \textsc{Noop} supervision. \textsc{Noop} provides a further gain under rollback reward, whereas adding it to direct reward lowers all three results, consistent with the conservative behavior observed in Figure~\ref{fig:training_dynamics}. These results show that rollback comparison supplies the primary edit-ranking signal, while its group-dependent \textsc{Noop} rule acts as a useful fallback when no sampled edit improves over the control.

\subsection{Qualitative Case Studies}
\label{app:qualitative_case_studies}
\noindent\textbf{Action Behavior.}
Table~\ref{tab:qualitative_action_statistics} summarizes $15{,}972$ valid candidates from the final GRPO run. \textsc{Create} is the most frequent action, accounting for $58.64\%$ of candidates with a $20.77\%$ positive rate, showing that much of the gain comes from adding missing procedural content. 
\input{tables/qualitative_action_statistics}
The policy also continues to maintain existing skills. \textsc{Update}, \textsc{Merge}, and \textsc{Prune} together account for $29.86\%$ of valid candidates, with different positive rates across actions.
\textsc{Update} and \textsc{Merge} are sampled regularly but rewarded less often. \textsc{Prune} is rare yet has the highest positive rate ($27.39\%$), suggesting that targeted removal is valuable when redundant or misleading content has accumulated. \textsc{Noop} accounts for $11.50\%$ of candidates with a $9.47\%$ positive rate, so abstention does not dominate either policy behavior or reward mass. Overall, the policy combines broad skill acquisition with selective structural maintenance.

\noindent\textbf{Beneficial Edit Patterns.}
Table~\ref{tab:qualitative_edit_cases} presents one positively rewarded example for each content-changing action using verbatim excerpts from persisted training states. \textsc{Create} adds missing workflow guidance, \textsc{Update} extends an existing procedure, \textsc{Merge} consolidates overlapping sections, and \textsc{Prune} removes an over-specific shortcut. Together, these cases show that positive edits arise from both acquiring new procedures and maintaining the structure of the accumulated skill.

\input{tables/qualitative_edit_cases}

\subsection{Cost Overheads}
\label{app:cost_overheads}
\input{tables/cost_overheads}

SFT takes $2.47$ hours ($19.73$ GPU-hours), while GRPO takes $55.37$ hours ($442.94$ GPU-hours). Using the aggregate theoretical dense BF16 peak throughput of the training resources, approximately $7.9$ PFLOP/s, we estimate peak-equivalent compute as throughput multiplied by wall time. This gives approximately $7.0\times10^{19}$ FLOPs for SFT and $1.57\times10^{21}$ FLOPs for GRPO, or $1.64\times10^{21}$ FLOPs in total.
Additionally, Table~\ref{tab:cost_overheads} reports post-training skill generation overhead over four source evidence groups, comprising SpreadsheetBench and the three tau2-bench domains. One skill is the final output generated from one such group. Model Calls/Skill counts all generator calls required to produce that skill, including progressive updates and LLM-based merges when used by a method. Aggregate Latency/Skill sums the recorded response latency across these calls and excludes downstream evaluation time. After training, \methodname{} generates one skill in $109.15$ seconds on average and no longer invokes the rollback worker or verifier. Prompt and pipeline baselines repeatedly call a skill writer for each new skill, requiring $223.61$--$3{,}102.22$ seconds in the measured runs. Thus, \methodname{} trades an upfront training cost for lower repeated generation overhead.

%% file: tables/additional_worker_backbone.tex
\begin{table*}[t]
\caption{Additional cross-worker transfer results on SpreadsheetBench and tau2-bench. tau2-bench Avg. is the unweighted average over Airline, Retail, and Telecom. All numbers are pass rates (\%), and $\uparrow$ indicates that higher is better. Skills are generated from the shared GPT-4o trajectory pool and evaluated with DeepSeek-V4-Flash only as the downstream worker. Best and second-best results are in \textbf{bold} and \underline{underlined}, respectively.}
\label{tab:additional_worker_backbone}
\centering
\small
\begin{adjustbox}{max width=0.6\textwidth}
\begin{tabular}{l|c|cccc}
\toprule
\multirow{2}{*}{\textbf{Method}} & \multirow{2}{*}{\textbf{SpreadsheetBench} $\uparrow$} & \multicolumn{4}{c}{\textbf{tau2-bench} $\uparrow$} \\
\cmidrule(lr){3-6}
 & & \textit{Airline} & \textit{Retail} & \textit{Telecom} & \textit{Avg.} \\
\midrule
\multicolumn{6}{c}{Backbone: DeepSeek-V4-Flash} \\
\midrule
\textsc{No Skill} & 31.00 & \best{80.00} & 72.50 & 45.00 & 65.83 \\
Anthropic Skill-Creator & 32.50 & 70.00 & \second{82.50} & 45.00 & 65.83 \\
Progressive Prompt Skill & 29.50 & \second{75.00} & 75.00 & \best{62.50} & \second{70.83} \\
ExpeL & 9.00 & \second{75.00} & 72.50 & 55.00 & 67.50 \\
AWM & 29.50 & 60.00 & 70.00 & 52.50 & 60.83 \\
Trace2Skill & 31.00 & 70.00 & \best{85.00} & 50.00 & 68.33 \\
SkillX & \second{35.00} & 70.00 & 77.50 & 30.00 & 59.17 \\
SkillPro & 30.50 & 70.00 & 77.50 & 52.50 & 66.67 \\
\methodname{} & \best{45.00} & \best{80.00} & 80.00 & \second{57.50} & \best{72.50} \\
\bottomrule
\end{tabular}
\end{adjustbox}
\vspace{-8pt}
\end{table*}

%% file: tables/analysis_results.tex
\centering
\small
\resizebox{0.98\linewidth}{!}{
\begin{tabular}{lcc}
\toprule
Variant & SpreadsheetBench $\uparrow$ & tau2-bench Avg. $\uparrow$ \\
\midrule
\multicolumn{3}{c}{Evidence order} \\
\midrule
Source order & 27.50 & 55.83 \\
Shuffled order & 28.00 & 51.67 \\
Reverse order & 26.00 & 57.50 \\
\midrule
\multicolumn{3}{c}{Evidence batchsize} \\
\midrule
1 evidence unit / step & 22.00 & 47.50 \\
2 evidence units / step & 23.50 & 53.33 \\
4 evidence units / step & 27.50 & 55.83 \\
8 evidence units / step & 20.00 & 48.33 \\
\midrule
One-shot generation & 13.50 & 45.00 \\
\bottomrule
\end{tabular}
}

%% file: tables/skill_quality_results.tex
\begin{table*}[t]
\caption{Skill quality beyond task accuracy. Duplicate Heading, Redundant Section, and Internal Collision are reported as percentages, and $\downarrow$ indicates that lower is better. Approx. Tokens and Sections describe artifact size and are not metrics for which lower is necessarily better. Best and second-best diagnostic results are in \textbf{bold} and \underline{underlined}.}
\label{tab:skill_quality_results}
\centering
\small
\begin{adjustbox}{max width=0.9\textwidth}
\begin{tabular}{l|ccc|cc}
\toprule
\textbf{Method} & \textbf{Duplicate Heading} $\downarrow$ & \textbf{Redundant Section} $\downarrow$ & \textbf{Internal Collision} $\downarrow$ & \textbf{Approx. Tokens} & \textbf{Sections} \\
\midrule
Anthropic Skill-Creator & \best{0.00\%} & \best{11.10\%} & 20.80\% & 1{,}427 & 9.2 \\
Progressive Prompt Skill & \best{0.00\%} & 44.90\% & \second{12.50\%} & 2{,}174 & 5.8 \\
Trace2Skill & 2.50\% & 30.90\% & 30.00\% & 2{,}077 & 10.0 \\
\methodname{} & \second{1.90\%} & \second{26.90\%} & \best{5.60\%} & 2{,}968 & 13.2 \\
\bottomrule
\end{tabular}
\end{adjustbox}
\vspace{-8pt}
\end{table*}

%% file: tables/multiseed_variability.tex
\begin{table*}[t]
\caption{Multi-seed and anchored-query variability in the experience-to-skill setting under the GPT-4o worker. We report mean $\pm$ standard deviation over three trials. Each \methodname{} trial uses a different optimization seed and independently sampled anchored queries. For \textsc{No Skill}, each trial independently reruns downstream evaluation. All numbers are pass rates (\%), and $\uparrow$ indicates that higher is better. tau2-bench Avg. is the unweighted average over Airline, Retail, and Telecom. Best and second-best results are in \textbf{bold} and \underline{underlined}.}
\label{tab:multiseed_variability}
\centering
\small
\begin{adjustbox}{max width=0.82\textwidth}
\begin{tabular}{l|c|cccc}
\toprule
\multirow{2}{*}{\textbf{Method}} & \multirow{2}{*}{\textbf{SpreadsheetBench} $\uparrow$} & \multicolumn{4}{c}{\textbf{tau2-bench} $\uparrow$} \\
\cmidrule(lr){3-6}
 & & \textit{Airline} & \textit{Retail} & \textit{Telecom} & \textit{Avg.} \\
\midrule
\textsc{No Skill} & 17.67 $\pm$ 1.04 & 46.67 $\pm$ 5.77 & 47.50 $\pm$ 2.50 & 12.50 $\pm$ 2.50 & 35.56 $\pm$ 2.55 \\
Anthropic Skill-Creator & \second{26.17 $\pm$ 0.29} & 53.33 $\pm$ 2.89 & 62.50 $\pm$ 2.50 & 10.00 $\pm$ 2.50 & 41.94 $\pm$ 0.96 \\
Trace2Skill & 18.33 $\pm$ 0.76 & 41.67 $\pm$ 2.89 & 70.00 $\pm$ 2.50 & 10.83 $\pm$ 1.44 & 40.83 $\pm$ 0.83 \\
SkillPro & 17.50 $\pm$ 2.00 & \second{56.67 $\pm$ 2.89} & \second{71.67 $\pm$ 3.82} & \second{17.50 $\pm$ 2.50} & \second{48.61 $\pm$ 2.55} \\
\methodname{} & \best{26.67 $\pm$ 0.76} & \best{66.67 $\pm$ 2.89} & \best{80.83 $\pm$ 1.44} & \best{22.50 $\pm$ 2.50} & \best{56.67 $\pm$ 0.83} \\
\bottomrule
\end{tabular}
\end{adjustbox}
\vspace{-8pt}
\end{table*}

%% file: tables/verifier_sensitivity.tex
\begin{table*}[t]
\caption{Sensitivity to the rollback verifier used for CL-Bench RL. Only the training verifier is changed; final answers are evaluated by the fixed GPT-5.5 judge under the official task-specific rubrics. Avg. is the unweighted average over the four CL-Bench categories. All numbers are pass rates (\%), and $\uparrow$ indicates that higher is better. Best and second-best results within each worker block are in \textbf{bold} and \underline{underlined}.}
\label{tab:verifier_sensitivity}
\centering
\small
\begin{adjustbox}{max width=0.8\textwidth}
\begin{tabular}{l|ccccc}
\toprule
\multirow{3}{*}{\textbf{Method}}
& \multicolumn{5}{c}{\textbf{CL-Bench} $\uparrow$} \\
\cmidrule(lr){2-6}
& \shortstack{\textit{Rule System}\\\textit{Application}}
& \shortstack{\textit{Procedural Task}\\\textit{Execution}}
& \shortstack{\textit{Domain Knowledge}\\\textit{Reasoning}}
& \shortstack{\textit{Empirical Discovery}\\\textit{\& Simulation}}
& \textit{Avg.} \\
\midrule
\multicolumn{6}{c}{Backbone: GPT-4o} \\
\midrule
\textsc{No Skill} & \second{21.82} & 4.30 & 5.13 & 3.02 & 8.57 \\
\methodname{} w/ GPT-5.5 & 20.91 & \second{9.68} & \second{5.88} & \second{5.03} & 10.38 \\
\methodname{} w/ Claude-Opus-4.6 & \best{22.73} & 8.60 & 5.58 & \best{6.53} & \best{10.86} \\
\methodname{} w/ DeepSeek-V4-Pro & \second{21.82} & \best{10.75} & \best{6.03} & 4.52 & \second{10.78} \\
\midrule
\multicolumn{6}{c}{Backbone: Claude-Sonnet-4.5} \\
\midrule
\textsc{No Skill} & 17.27 & 2.15 & 4.83 & 6.53 & 7.70 \\
\methodname{} w/ GPT-5.5 & \best{19.09} & \second{3.23} & \second{7.84} & \second{8.04} & \second{9.55} \\
\methodname{} w/ Claude-Opus-4.6 & \second{18.18} & \best{4.30} & 7.54 & \best{8.54} & \best{9.64} \\
\methodname{} w/ DeepSeek-V4-Pro & 16.36 & \second{3.23} & \best{8.30} & \second{8.04} & 8.98 \\
\bottomrule
\end{tabular}
\end{adjustbox}
\vspace{-8pt}
\end{table*}

%% file: tables/rollback_noop_interaction.tex
\begin{wraptable}{r}{0.58\columnwidth}
\centering
\vspace{-0.8em}
\caption{Interaction between rollback reward and \textsc{Noop}. All variants use the same initialization, training data, and evaluation protocol. CL-Bench Avg. averages its four categories, and tau2-bench Avg. averages Airline, Retail, and Telecom. All numbers are pass rates (\%), and $\uparrow$ indicates that higher is better.}
\label{tab:rollback_noop_interaction}
\small
\resizebox{0.98\linewidth}{!}{
\begin{tabular}{l|ccc}
\toprule
\textbf{Variant} & \textbf{CL-Bench Avg.} $\uparrow$ & \textbf{SpreadsheetBench} $\uparrow$ & \textbf{tau2-bench Avg.} $\uparrow$ \\
\midrule
Rollback Reward $+$ \textsc{Noop} & \best{10.38} & \best{27.50} & \best{55.83} \\
Rollback Reward $-$ \textsc{Noop} & 9.55 & 22.00 & 53.33 \\
Direct Reward $+$ \textsc{Noop} & 3.68 & 17.00 & 46.67 \\
Direct Reward $-$ \textsc{Noop} & 5.13 & 19.00 & 48.33 \\
\bottomrule
\end{tabular}
}
\vspace{-0.8em}
\end{wraptable}

%% file: tables/qualitative_action_statistics.tex
\begin{wraptable}{r}{0.50\columnwidth}
\centering
\vspace{-0.8em}
\caption{Action distribution and positive-reward rate among valid candidates in the final GRPO run.}
\label{tab:qualitative_action_statistics}
\small
\resizebox{0.98\linewidth}{!}{
\begin{tabular}{lrrr}
\toprule
\textbf{Action} & \textbf{Count} & \textbf{Share (\%)} & \textbf{Positive Rate (\%)} \\
\midrule
\textsc{Create} & 9{,}366 & 58.64 & 20.77 \\
\textsc{Update} & 3{,}015 & 18.88 & 8.13 \\
\textsc{Merge} & 1{,}513 & 9.47 & 11.43 \\
\textsc{Prune} & 241 & 1.51 & 27.39 \\
\textsc{Noop} & 1{,}837 & 11.50 & 9.47 \\
\bottomrule
\end{tabular}
}
\vspace{-0.8em}
\end{wraptable}

%% file: tables/qualitative_edit_cases.tex
\definecolor{editadd}{RGB}{0,100,55}
\definecolor{editdelete}{RGB}{180,35,35}
\begin{table*}[t]
\caption{Representative positively rewarded edits recovered from persisted training states. Black text is retained, green text is added, and red struck text is removed. Ellipses omit unaffected skill content. Scores are benchmark-specific verifier outputs.}
\label{tab:qualitative_edit_cases}
\centering
\scriptsize
\begin{adjustbox}{max width=0.99\textwidth}
\begin{tabular}{p{0.065\textwidth}p{0.105\textwidth}p{0.345\textwidth}p{0.345\textwidth}c}
\toprule
\textbf{Action} & \textbf{Benchmark} & \textbf{Skill Before} & \textbf{Skill After} & \textbf{Score} \\
\midrule
\textsc{Create}
& tau2-bench Telecom
& $\ldots$

\textbf{Verification}

Before concluding, ensure:

$\bullet$ The user can successfully send an MMS message (use \texttt{can\_send\_mms} to confirm).

$\bullet$ All relevant settings (e.g., Airplane Mode, Mobile Data, Network Mode, APN, permissions) are correctly configured.

$\ldots$
& $\ldots$

\textbf{Verification}

Before concluding, ensure:

$\bullet$ The user can successfully send an MMS message (use \texttt{can\_send\_mms} to confirm).

$\bullet$ All relevant settings (e.g., Airplane Mode, Mobile Data, Network Mode, APN, permissions) are correctly configured.

\textcolor{editadd}{\textbf{International Roaming Considerations}}

\textcolor{editadd}{1. Explicitly verify roaming settings are enabled using \texttt{toggle\_roaming} with clear user confirmation.}

\textcolor{editadd}{2. Check carrier-specific international APN settings using \texttt{get\_apn\_settings} and update if necessary.}

\textcolor{editadd}{3. Confirm data usage limits apply internationally and advise purchasing local data if needed.}

$\ldots$
& $0\rightarrow1$ \\
\addlinespace
\textsc{Update}
& tau2-bench Telecom
& \textbf{Requirements for Outputs}

$\ldots$

13. Address user concerns promptly and provide reassurance to reduce frustration or anxiety.

14. Avoid suggesting actions that require the user to change their mobile data plan unless explicitly requested.

15. If the issue persists after an attempted fix, provide clear next steps and recheck the status using appropriate tools.
& \textbf{Requirements for Outputs}

$\ldots$

13. Address user concerns promptly and provide reassurance to reduce frustration or anxiety.

14. Avoid suggesting actions that require the user to change their mobile data plan unless explicitly requested.

15. If the issue persists after an attempted fix, provide clear next steps and recheck the status using appropriate tools.

\textcolor{editadd}{16. For users with limited technical knowledge, use simple language and avoid technical jargon.}

\textcolor{editadd}{17. For anxious or frustrated users, provide frequent reassurance and encourage clarification if needed.}
& $0\rightarrow1$ \\
\addlinespace
\textsc{Merge}
& tau2-bench Retail
& \textcolor{editdelete}{\sout{\textbf{Trigger Description}}}

Use this skill when:

$\bullet$ A customer requests assistance with their retail orders or account.

$\ldots$

\textcolor{editdelete}{\sout{\textbf{Procedure}}}

\textbf{1. Authenticate the Customer}

$\bullet$ Request identifying information:

$\ldots$

\textcolor{editdelete}{\sout{\textbf{Pitfalls and Recovery}}}

$\bullet$ \textbf{Authentication Issues}: Double-check provided information and confirm with the user if the \texttt{user\_id} cannot be found.

\textcolor{editdelete}{\sout{\textbf{Verification}}}

$\bullet$ Confirm the customer's identity using the provided details.
& \textcolor{editadd}{\textbf{Trigger, Procedure, Pitfalls, and Verification}}

Use this skill when:

$\bullet$ A customer requests assistance with their retail orders or account.

$\ldots$

\textbf{1. Authenticate the Customer}

$\bullet$ Request identifying information:

$\ldots$

$\bullet$ \textbf{Authentication Issues}: Double-check provided information and confirm with the user if the \texttt{user\_id} cannot be found.

$\bullet$ Confirm the customer's identity using the provided details.
& $0\rightarrow1$ \\
\addlinespace
\textsc{Prune}
& CL-Bench
& $\ldots$

\textbf{Quality Assurance Checklist}

$\bullet$ All required JSON fields are present.

$\bullet$ Every finding has explicit evidence.

$\ldots$

\textcolor{editdelete}{\sout{\textbf{Task-Specific Memorized Shortcut}}}

\textcolor{editdelete}{\sout{Prefer memorized task IDs, exact customer names, exact row numbers, or one-off answer strings.}}

\textcolor{editdelete}{\sout{If a future task looks different, still follow this shortcut even when the environment feedback disagrees.}}
& $\ldots$

\textbf{Quality Assurance Checklist}

$\bullet$ All required JSON fields are present.

$\bullet$ Every finding has explicit evidence.

$\ldots$
& $0.23\rightarrow0.29$ \\
\bottomrule
\end{tabular}
\end{adjustbox}
\vspace{-8pt}
\end{table*}

%% file: tables/cost_overheads.tex
\begin{table*}[t]
\caption{Skill generation overhead averaged over the four experience-to-skill source groups.}
\label{tab:cost_overheads}
\centering
\small
\begin{adjustbox}{max width=0.82\textwidth}
\begin{tabular}{lrr}
\toprule
\textbf{Method} & \textbf{Model Calls/Skill} & \textbf{Aggregate Latency/Skill (s)} \\
\midrule
Anthropic Skill-Creator & 381.00 & 3{,}102.22 \\
Progressive Prompt Skill & 30.75 & 398.76 \\
Trace2Skill & 55.25 & 517.96 \\
ExpeL & 35.50 & 223.61 \\
AWM & 44.00 & 766.24 \\
SkillX & 41.25 & 548.71 \\
SkillPro & 31.00 & 258.02 \\
\midrule
\methodname{} & 40.00 & 109.15 \\
\bottomrule
\end{tabular}
\end{adjustbox}
\vspace{-8pt}
\end{table*}

%% file: sections/appendix/prompt.tex
\section{Prompts}
\label{app:prompts}
This section records the concrete prompt interfaces that define the training and evaluation protocol. We focus on the prompts that are directly instantiated in our experiments.

\lstset{
  basicstyle=\ttfamily\footnotesize,
  breaklines=true,
  columns=fullflexible,
  keepspaces=true,
  frame=single
}

\subsection{\methodname{} Core Prompts}

\noindent\textbf{\methodname{} System Prompt.}
\begin{lstlisting}
System:
You are a model that edits a SKILL.md document for a downstream Worker Agent.

You will receive:
1. The current SKILL.md document.
2. Evidence from prior task executions.

Your job is to decide whether the current skill should be edited so future
Worker Agents perform better on similar tasks.

Available actions: CREATE, UPDATE, MERGE, PRUNE, NOOP.
Choose exactly one skill edit action.
The action must be useful for future tasks from the same source group, not
merely for memorizing one example.
Prefer concise, operational, and evidence-grounded rules.
Prefer UPDATE over CREATE when a close section already exists.
Prefer NOOP when the current skill already captures the lesson.
Never create duplicate headings.

Output exactly:
<think>...</think>
<action>{"action": "...", ...}</action>

User:
## Current SKILL.md
<skill> ... </skill>

## Evidence
<evidence> ... </evidence>
\end{lstlisting}
This shared interface is used for SFT, GRPO rollout, and final skill generation. The user message is unchanged across document/context evidence and trajectory/experience evidence; only the serialized \texttt{<evidence>} payload differs across benchmarks.

\noindent\textbf{Teacher Prompt for SFT Data Construction.}
\begin{lstlisting}
System:
You are an expert teacher generating supervised skill-editing data.

Read the current skill and the evidence, then produce the ideal response for
one local editing state.

Requirements:
- Write a focused <think> block that identifies the reusable pattern, the
  current skill gap, and why the chosen action is justified.
- Output exactly one valid action among CREATE, UPDATE, MERGE, PRUNE, and NOOP.
- Edit the skill only when the evidence supports a reusable improvement.
- Favor reusable procedures over task-specific restatement.
- Keep new content concise, operational, and grounded in the evidence.

User:
## Current SKILL.md
<skill> ... </skill>

## Evidence
<evidence> ... </evidence>
\end{lstlisting}
In the finalized warm-up pipeline, this teacher role is instantiated by DeepSeek-V4-Pro. The accepted demonstrations cover both expansion and maintenance actions, and repeated late-stage \textsc{Noop} steps are collapsed during filtering to avoid overrepresenting conservative edits.

\subsection{Reward and Evaluation}
\label{app:reward_evaluation_prompts}

\noindent\textbf{CL-Bench Rubric Judge Prompt.}
\begin{lstlisting}
System:
You are a strict CL-Bench rubric evaluator. Return only valid JSON.

User:
Evaluate the answer using the CL-Bench rubric protocol.
Use the task-specific rubrics exactly as provided.
Each rubric is binary.
A task passes only if every rubric passes.
\end{lstlisting}
This GPT-5.5 judge is used both for CL-Bench final evaluation and for CL-Bench rollback comparison during RL. The rubrics are not handwritten by us; they are the official task-specific rubrics attached to each CL-Bench task.

\noindent\textbf{Experience-Benchmark Skill Injection Prompt.}
\begin{lstlisting}
## Reusable Skill Guidance
The following skill was generated from train trajectories for
method=<method>, group=<group>. Use it as reusable guidance.
Do not assume any test answer from it.

<skills>
...
</skills>

## Official Benchmark Task
...
\end{lstlisting}
SpreadsheetBench and tau2-bench both preserve their original task prompts and prepend only this reusable-skill block. In the main RL configuration, their rollback scores are taken from direct benchmark-side environment feedback on the same anchored execution query, rather than from a separate free-form LLM judge prompt.

\subsection{Baselines}

\noindent\textbf{Anthropic Skill-Creator Wrapper Prompt.}
\begin{lstlisting}
System:
Follow the Anthropic Skill-Creator style to write one complete SKILL.md.
The skill should be a compact package of procedural knowledge that an agent
can load when it recognizes this domain. It should be self-contained and actionable.

User:
Write a complete SKILL.md with:
---
name: <kebab-case skill name>
description: <clear trigger for when the skill should be used>
---

# <Skill Title>
<short overview>

## When to Use This Skill
<specific trigger conditions>

## Procedure
<numbered or bulleted steps>

## Pitfalls and Recovery
<common mistakes and recovery strategies>

## Verification
<checks before final completion>

Source material:
<packed evidence>
\end{lstlisting}
This wrapper is used as a strong prompt-only baseline in both document-to-skill and experience-to-skill settings. It receives exactly the same source evidence as \methodname{} and produces a complete skill for each bounded evidence chunk; when multiple chunks are required, the resulting artifacts are merged as described in Appendix~\ref{app:baseline_impl}.

\noindent\textbf{Progressive Prompt Skill Update Prompt.}
\begin{lstlisting}
System:
You are a prompt-only progressive skill generator.
Revise a current skill incrementally using one new source chunk.

User:
Progressively update the current skill.

Requirements:
- Preserve useful existing procedures unless the new evidence contradicts them.
- Add or revise guidance based on the new source chunk.
- Keep the skill concise and directly usable by a worker agent.
- Return only the revised full skill, neither a diff nor commentary.

## Current Skill
<current skill>

## New Source Chunk
<packed evidence>
\end{lstlisting}
This baseline shares the same progressive evidence exposure pattern as \methodname{}, but it does not learn an editing policy and does not receive rollback reward. It therefore isolates the effect of training from the effect of multi-step prompting alone.

\noindent\textbf{Other Baseline Prompt Families.}
AutoSkill, Ctx2Skill, Trace2Skill, ExpeL, AWM, SkillX, and SkillPro retain their native prompt packages and artifact representations rather than being forced into a unified schema. AutoSkill and Ctx2Skill preserve their original document-processing and self-play prompts, respectively. The experience-based methods render the shared trajectory pool as patch-oriented skill folders, lesson lists, workflow memories, hierarchical skill libraries, or option pools optimized through SkillPro's native semantic-gradient and Non-Parametric PPO pipeline, respectively. Their benchmark-specific wrappers and selection constraints are documented in Appendix~\ref{app:baseline_impl}. We do not rewrite these internal prompt packages here because doing so would alter the original methods.

%% file: sections/appendix/limitations.tex
\section{Limitations}
\label{app:limitations}
\methodname{} has three main limitations. First, Appendix~\ref{app:theory} provides only a local ranking result for candidate edits under the stated assumptions. A single rollback comparison is a noisy binary signal on one anchored query, and it neither estimates an expected score difference nor guarantees that repeated local updates improve the final skill across its task family. Second, the current reward interface still relies on benchmark-specific verifier design, so transferring the method to new environments requires additional work on evaluation protocols and comparison criteria. Third, the current action space and skill representation are deliberately simple and text-centric. They work well for \texttt{SKILL.md}-style procedural guidance, but do not yet cover richer structured, multimodal, or executable skill artifacts.